\newcommand\modify[1]{\textcolor{red}{#1}} 
\renewcommand\modify[1]{#1}
\def\ie{i.e.}
\def\eg{e.g.}
\def\st{~~\textrm{s.t.}~~}
\def\and{\textrm{and}}
\def\conv{\textrm{conv}}
\def\b{\textbf{b}}
\def\0{\textbf{0}}
\def\1{\textbf{1}}
\def\a{\boldsymbol{a}}
\def \bfdelta{\boldsymbol{\delta}}
\def\c{\boldsymbol{c}}
\def\e{\boldsymbol{e}}
\def\v{\boldsymbol{v}}
\def\w{\boldsymbol{w}}
\def\x{\boldsymbol{x}}
\def\A{\mathcal{A}}
\def\P{\mathcal{P}}
\def\T{\mathcal{T}}
\def\S{\mathcal{S}}
\def\transpose{\top}
\newcommand{\RR}{I\!\!R} 
\newcommand{\myparagraph}[1]{\smallskip\noindent\textbf{#1.}}
\DeclareMathOperator*{\argmin}{arg\,min}
\newtheorem{theorem}{Theorem}[section]
\newtheorem{problem}{Problem}[section]
\newtheorem{lemma}{Lemma}[section]
\newtheorem{proposition}{Proposition}[section]
\newtheorem{definition}{Definition}[section]
\newtheorem{remark}{Remark}[section]
\newtheorem*{lemma*}{Lemma}
\newtheorem*{theorem*}{Theorem}
\ifcvprfinal\pagestyle{empty}\fi
\begin{document}

\title{Elastic net subspace clustering: theoretical analysis and \\an oracle based active set algorithm}
\title{Efficient elastic net optimization applied to subspace clustering}

\title{Efficient Elastic Net Optimization and its Application for Scalable Subspace Clustering}
\title{Efficient Elastic Net Optimization and its Application to Scalable Subspace Clustering}
\title{Efficient Elastic Net Optimization and its Application to Fast Subspace Clustering}
\title{Efficient Active Set Algorithm for Elastic Net Subspace Clustering}
\title{Efficient Active Set Algorithm for Fast Elastic Net Subspace Clustering}
\title{Efficient Active Set Algorithm for Scalable Elastic Net Subspace Clustering}
\title{Efficient Active Set Algorithm for Provable and Scalable Elastic Net Subspace Clustering}
\title{Provable and Scalable Elastic Net Subspace Clustering}
\title{Scalable and Provable Elastic Net Subspace Clustering}

\title{Elastic Net Subspace Clustering: Theory and Scalable Oracle-Guided Algorithm}
\title{Elastic Net Subspace Clustering: \\Theoretical Analysis and an Oracle Based Active Set Algorithm}
\title{Elastic Net Subspace Clustering: Theory and Scalable Active Set Algorithm}
\title{Scalable and Provable Active Set Algorithm for Elastic Net Subspace Clustering}
\title{Scalable Elastic Net Subspace Clustering: An Oracle Based Active Set Algorithm}
\title{Oracle Based Active Set Algorithm for Scalable Elastic Net Subspace Clustering}

\author{Chong You$^\dag$ \quad Chun-Guang Li$^*$\quad Daniel P. Robinson$^\ddagger$\quad Ren\'e Vidal$^\dag$\\
$^\dag$Center for Imaging Science, Johns Hopkins University\\
$^*$SICE, Beijing University of Posts and Telecommunications\\
$^\ddagger$Applied Mathematics and Statistics, Johns Hopkins University\\
}

\maketitle
\thispagestyle{empty}

\begin{abstract}

State-of-the-art subspace clustering methods are based on expressing each data point as a linear combination of other data points while regularizing the matrix of coefficients with $\ell_1$, $\ell_2$ or nuclear norms. $\ell_1$ regularization is guaranteed to give a subspace-preserving affinity (i.e., there are no connections between points from different subspaces) under broad theoretical conditions, but the clusters may not be connected. $\ell_2$ and nuclear norm regularization often improve connectivity, but give a subspace-preserving affinity only for independent subspaces. Mixed $\ell_1$, $\ell_2$ and nuclear norm regularizations offer a balance between the subspace-preserving and connectedness properties, but this comes at the cost of increased computational complexity. This paper studies the geometry of the elastic net regularizer (a mixture of the $\ell_1$ and $\ell_2$ norms) and uses it to derive a provably correct and scalable active set method for finding the optimal coefficients. Our geometric analysis also provides a theoretical justification and a geometric interpretation for the balance between the connectedness (due to $\ell_2$ regularization) and subspace-preserving (due to $\ell_1$ regularization) properties for elastic net subspace clustering. Our experiments show that the proposed active set method not only achieves state-of-the-art clustering performance, but also efficiently handles large-scale datasets.

\vspace{-1em}
\end{abstract}

\section{Introduction}

In many computer vision applications, including image representation and compression \cite{Hong:TIP06}, motion segmentation \cite{Costeira:IJCV98,Vidal:IJCV08,Rao:PAMI10}, temporal video segmentation \cite{Vidal:PAMI05}, and face clustering \cite{Ho:CVPR03}, high-dimensional datasets can be well approximated by a union of low-dimensional subspaces. In this case, the problem of clustering a high-dimensional dataset into multiple classes or categories reduces to the problem of assigning each data point to its own subspace and recovering the underlying low-dimensional structure of the data, a problem known in the literature as \textit{subspace clustering} \cite{Vidal:SPM11-SC}.


\myparagraph{Prior Work}
Over the past decade, the subspace clustering problem has received a lot of attention in the literature
and many methods have been developed. Among them, spectral clustering
based methods have become extremely popular \cite{Chen:IJCV09,
  Zhang:IJCV12, Elhamifar:CVPR09, Elhamifar:TPAMI13, Liu:ICML10,
  Liu:TPAMI13, Favaro:CVPR11, Vidal:PRL14, Lu:ECCV12,Dyer:JMLR13,Heckel:arxiv13,Park:NIPS14,Li:CVPR15,Tsakiris:FSASCICCV15}
(see \cite{Vidal:SPM11-SC} for details). These methods usually divide
the problem into two steps: a) learning  an affinity matrix that characterizes whether two points are likely to lie in the same subspace, and b) applying spectral clustering to this affinity. Arguably, the first step is the most important, as the success of spectral clustering depends on having an appropriate affinity matrix.

State-of-the-art methods for constructing the affinity matrix are based on the \emph{self-expressiveness model} \cite{Elhamifar:CVPR09}.
Under this model, each data point $\x_j$ is expressed as a linear combination of all other data points, \ie, $\x_j = \sum_{i \ne j} \x_i c_{ij} + \e_j$, where the coefficient $c_{ij}$ is used to define an affinity between points $i$ and $j$, and the vector $\e_j$ captures deviations from the self-expressive model. The coefficients are typically found by solving an optimization problem of the form
\vspace{-0.5em}
\begin{equation}
\min _{\c_j, \e_j} r(\c_j) + \gamma \cdot h(\e_j) \st \x_j = X \c_j + \e_j, c_{jj} = 0,
\label{eq:self-expression}
\end{equation}
%
where $X = [\x_1, \cdots, \x_N]$ is the data matrix, $\c_j = [c_{1j}, \dots , c_{Nj}]^\transpose$ is the vector of coefficients, $r(\cdot)$ is a properly chosen regularizer on the coefficients, $h(\cdot)$ is a properly chosen regularizer on the noise or corruption, and $\gamma > 0$ is a parameter that balances these two regularizers.

The main difference among state-of-the-art methods lies in the choice of the regularizer $r(\cdot)$. The sparse subspace clustering (SSC) method \cite{Elhamifar:CVPR09} searches for a sparse representation using $r(\cdot) = \|\cdot\|_1$. While under broad theoretical conditions (see
\cite{Elhamifar:TPAMI13,Soltanolkotabi:AS13,You:ICML15}) the representation produced by SSC is guaranteed to be \emph{subspace preserving} (\ie,
$c_{ij} \neq 0$ only if $\x_i$ and $\x_j$ are in the same subspace), the affinity
matrix may lack \emph{connectedness} \cite{Nasihatkon:CVPR11} (\ie, data points from the same subspace may not form a connected component of the affinity graph due to the sparseness of the connections, which may cause over-segmentation). Other recently proposed sparsity based methods, such as orthogonal matching pursuit (OMP) \cite{Dyer:JMLR13,You:CVPR16-SSCOMP} and nearest subspace neighbor (NSN) \cite{Park:NIPS14}, also suffer from the same connectivity issue.

As an alternative, the least squares regression (LSR) method  \cite{Lu:ECCV12} uses the regularizer $r(\cdot) = \frac{1}{2}\| \cdot
\|_2^2$. One benefit of LSR is that the representation matrix is generally dense,
which alleviates the connectivity issue of sparsity based
methods. However, the representation is known to be subspace preserving only when the subspaces are independent,\footnote{Subspaces $\{\S_\kappa\}$
  are independent if $\dim (\sum_\kappa \S_\kappa) = \sum_\kappa
  \dim(\S_\kappa)$. } which significantly limits its
applicability. Nuclear norm regularization based methods, such as low
rank representation (LRR) \cite{Liu:ICML10} and low rank subspace
clustering (LRSC) \cite{Favaro:CVPR11,Vidal:PRL14}, also suffer from the same limitation \cite{Wang:NIPS13-LRR+SSC}.


To bridge the gap between the subspace preserving and connectedness properties,
\cite{Wang:NIPS13-LRR+SSC,Panagakis:PRL14,Fang:TKDE15} propose to use mixed norms. For example, the low rank sparse subspace clustering (LRSSC) method \cite{Wang:NIPS13-LRR+SSC}, which uses a mixed $\ell_1$ and nuclear norm regularizer, is shown to give a subspace preserving representation under conditions which are similar to but stronger than those of SSC. However, the justification for the improvements in connectivity given by LRSSC is merely experimental. Likewise, \cite{Panagakis:PRL14,Fang:TKDE15} propose to use a mixed $\ell_1$ and $\ell_2$ norm given by
\begin{equation}
r(\c) = \lambda \|\c\|_1 + \frac{1-\lambda}{2} \|\c\|_2 ^2,
\label{eq:r-l1_l2}
\end{equation}
where $\lambda \in [0,1]$ controls the trade-off between the two
regularizers. However, \cite{Panagakis:PRL14,Fang:TKDE15} do not
provide a theoretical justification for the benefits of
the method. Other subspace clustering regularizers studied in
\cite{Lu:ICCV13-TraceLasso} and \cite{Lai:ECCV14} use the trace
lasso \cite{Grave:NIPS11} and the $k$-support norm
\cite{Argyriou:NIPS12}, respectively. However,
no theoretical justification is provided in \cite{Lu:ICCV13-TraceLasso,Lai:ECCV14} for the benefit of their methods.

Another issue with the aforementioned methods \cite{Wang:NIPS13-LRR+SSC, Panagakis:PRL14, Fang:TKDE15, Lu:ICCV13-TraceLasso, Lai:ECCV14} is that they do not provide efficient algorithms to deal with large-scale datasets. To address this issue,  \cite{Chen:AAAI11} proposes to find the representation of $X$ by a few anchor points that are sampled from $X$ and then perform spectral clustering on the anchor graph. In \cite{Peng:CVPR13} the authors propose to cluster a small subset of the original data and then classify the rest of the data based on the learned groups. However, both of these strategies are suboptimal in that they sacrifice clustering accuracy for computational efficiency.


\myparagraph{Paper Contributions}
In this paper, we exploit a mixture of $\ell_1$ and $\ell_2$ norms to balance the subspace preserving and connectedness properties. Specifically, we use $r(\cdot)$ as in \eqref{eq:r-l1_l2} and $h(\e) = \frac{1}{2}\|\e\|_2^2$. The method is thus a combination of SSC and LSR and reduces to each of them when $\lambda = 1$ and $\lambda = 0$, respectively. In the statistics literature, the optimization program using this regularization is called \textit{Elastic Net} and is used for variable selection in regression problems \cite{Zou:JRSS05}. Thus we refer to this method as the Elastic Net Subspace Clustering (EnSC).
%
%
%
%

This work makes the following contributions:
\begin{enumerate}
\item We propose an efficient and provably correct active-set based algorithm for solving the elastic net problem. The proposed algorithm exploits the fact that the nonzero entries of the elastic net solution fall into an \emph{oracle region}, which we use to define and efficiently update an active set. The proposed update rule leads to an iterative algorithm which is shown to converge to the optimal solution in a finite number of iterations.

\item We provide theoretical conditions under which the affinity generated by EnSC is subspace preserving, as well as a clear geometric interpretation for the balance between the subspace-preserving and connectedness properties. Our conditions depend on a \emph{local} characterization of the distribution of the data, which improves over prior \emph{global} characterizations.

\item We present experiments on computer vision datasets that demonstrate the superiority of our method in terms of both clustering accuracy and scalability.
\end{enumerate}


\section{Elastic Net: Geometry and a New Algorithm}
\label{sec:elastic-net}


In this section, we study the elastic net optimization problem, and present a new active-set based optimization algorithm for solving it. Consider the objective function
\begin{equation}
	f(\c;~ \b, A) := \lambda \|\c\|_1 + \frac{1-\lambda}{2} \|\c\|_2^2 + \frac{\gamma}{2} \|\b - A \c\|_2^2,
	\label{eq:def-f}
\end{equation}
where $\b \in \RR^D$, $A = [\a_1, \cdots, \a_N] \in \RR^{D\times N}$, $\gamma > 0$, and $\lambda \in [0,1)$ (the reader is referred to the appendix for a study of the case $\lambda=1$). Without loss of
generality, we assume that $\b$ and $\{\a_j\}_{j=1}^N$
are normalized to be of unit $\ell_2$ norm in our analysis. The elastic net model then
computes
\begin{equation}	\label{eq:en}
	\c^*(\b, A) := \argmin_{\c} f(\c;~\b, A).
\end{equation}
We note that $\c^*(\b,A)$ is unique since $f(\c;~\b,A)$ is a strongly
convex function; we use the notation $\c^*$ in place of $\c^*(\b,A)$
when the meaning is clear.


In the next two sections, we present a geometric analysis of the
elastic net solution, and use this analysis to design an
active-set algorithm for efficiently solving~\eqref{eq:en}.

\subsection{Geometric structure of the elastic net solution}

\begin{figure*}
	\centering
	\subfigure[$\lambda = 1$]{\includegraphics[scale = 0.26]{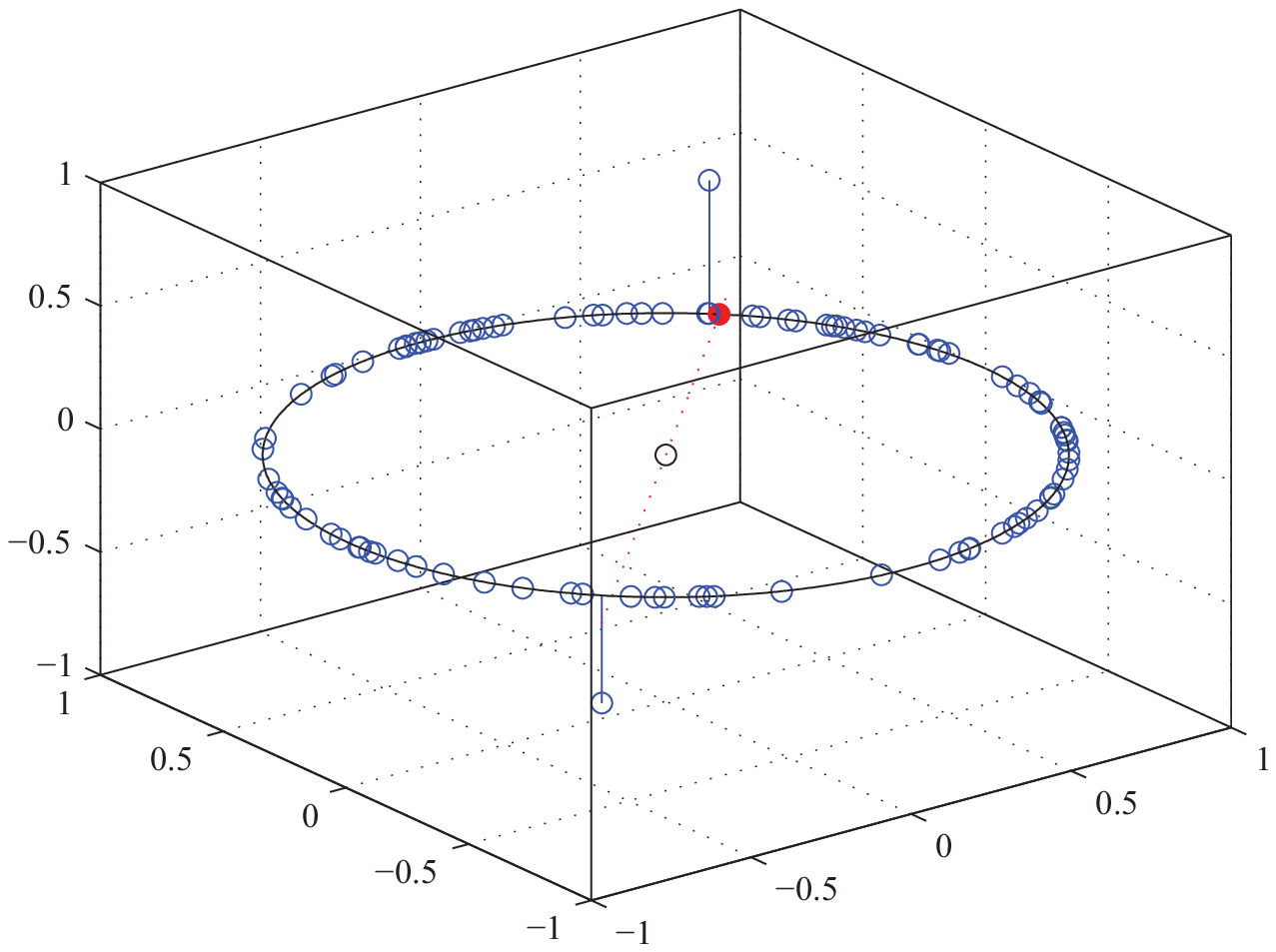}}
	~
	\subfigure[$\lambda = 0.9$]{\includegraphics[scale = 0.26]{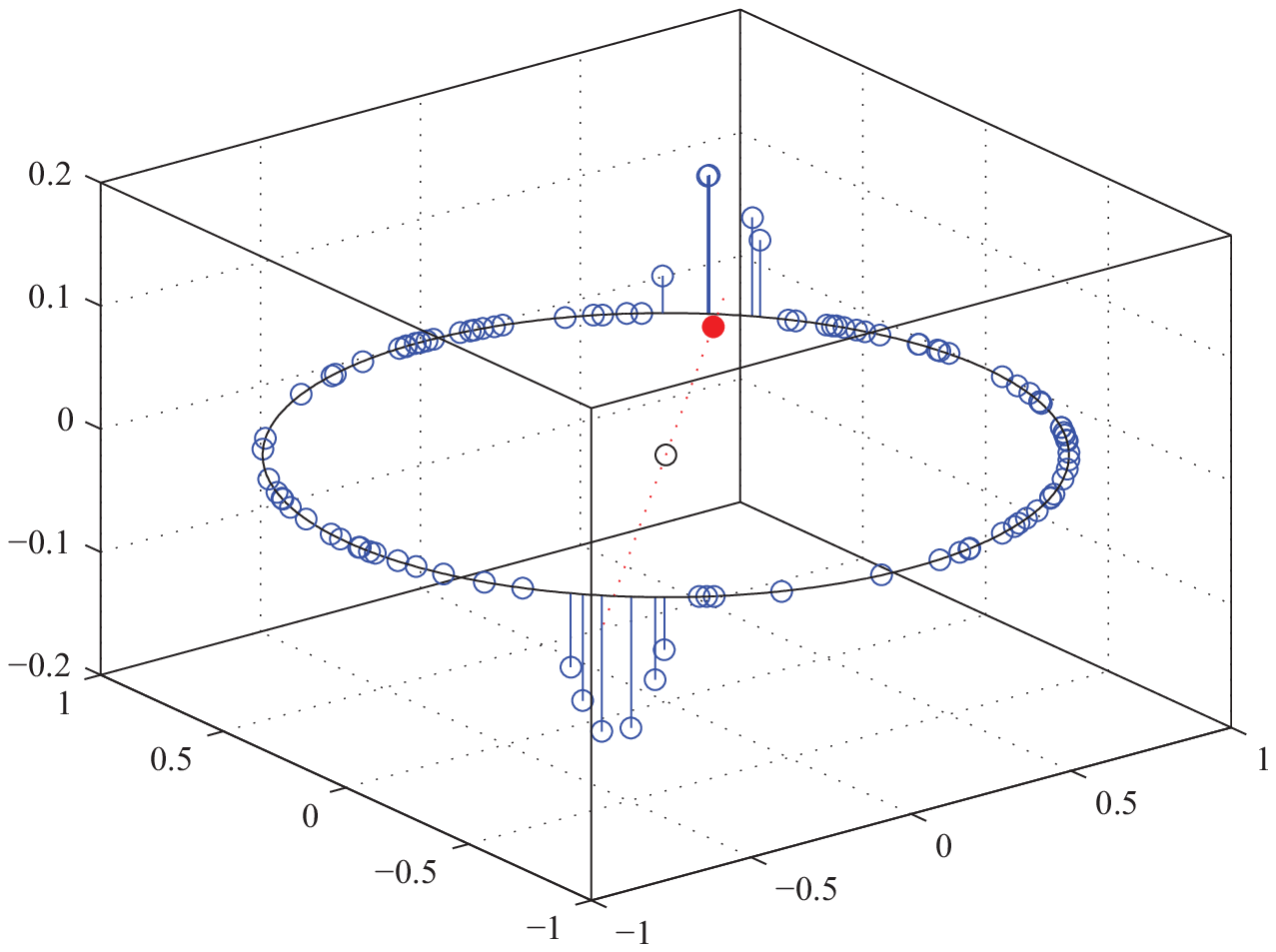}}
	~
	\subfigure[$\lambda = 0.3$]{\includegraphics[scale = 0.26]{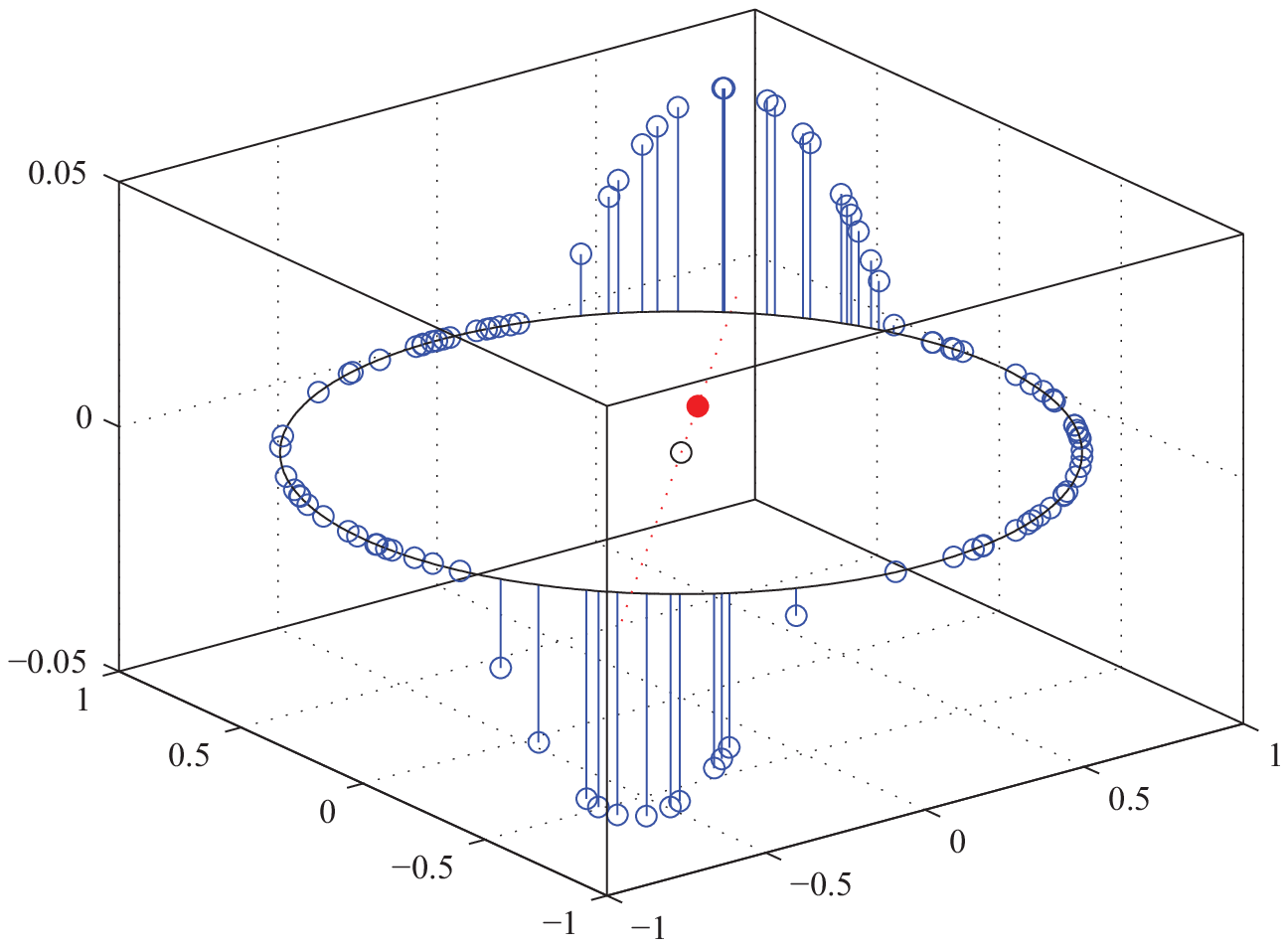}}
	~
	\subfigure[$\lambda = 0$]{\includegraphics[scale = 0.26]{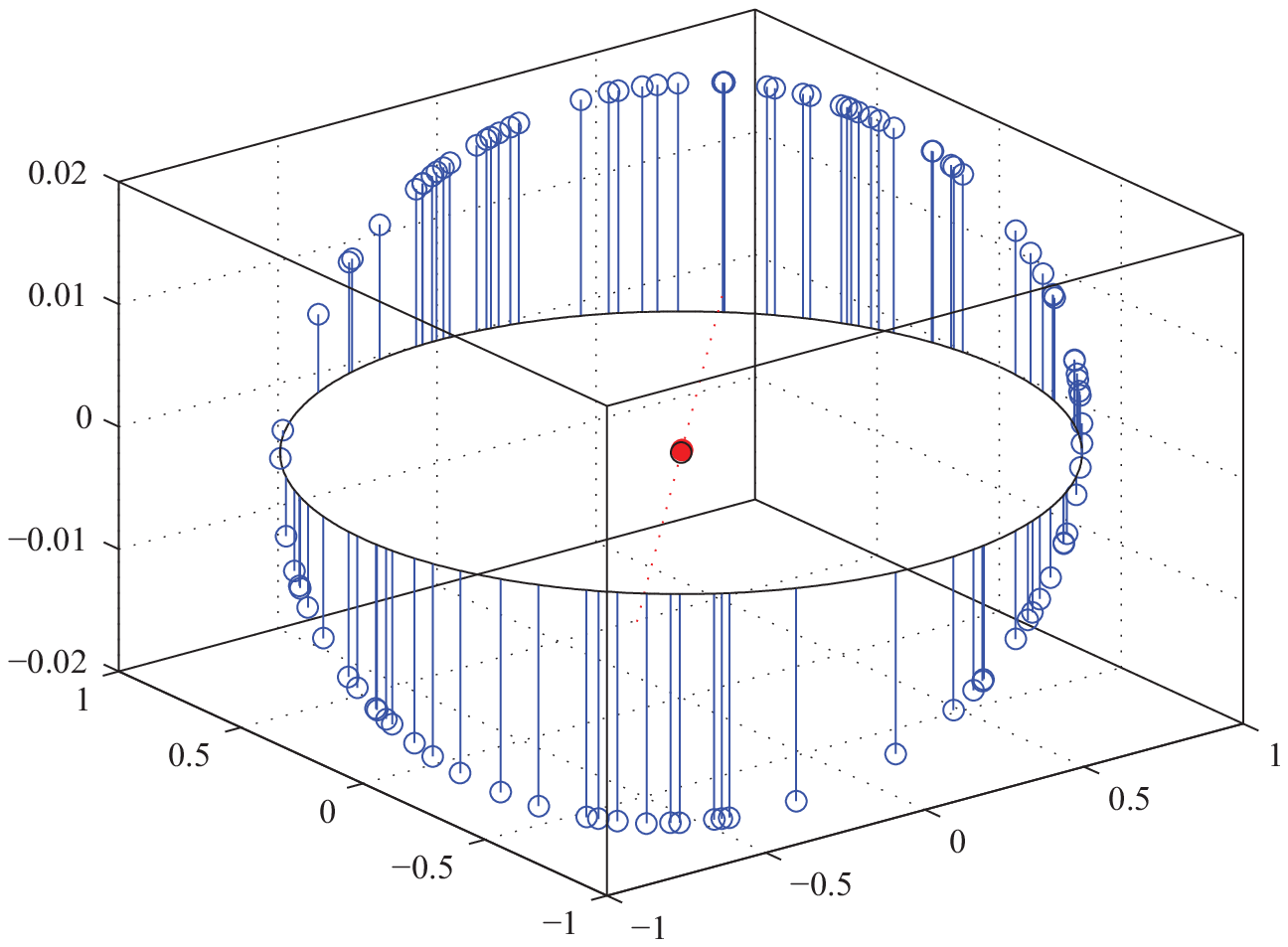}}
	\caption{Illustration of the structure of the solution $\c^*$
          for a data matrix $A$ containing 100 randomly generated points
          in $\RR^2$, which are shown as blue dots in the $x$-$y$ plane. The $z$
          direction shows the magnitude for each coefficient $\c^*_j$. The
          red dot represents the oracle point $\bfdelta(\b, A)$, with
          its direction denoted by the red dashed line. The value for $\gamma$
          is fixed at $50$, but the value for $\lambda$ varies as depicted.}
	\label{fig:geometry}
\end{figure*}
We first introduce the concept of an oracle point.

\begin{definition}[Oracle Point] \label{def:delta}
	The oracle point associated with the optimization problem
        \eqref{eq:en} is defined to be
	\begin{equation}
	\bfdelta(\b, A) := \gamma \cdot \big(\b - A \c^*(\b, A)\big).
	\label{eq:def-delta}
	\end{equation}
\end{definition}
\noindent
When there is no risk of confusion, we omit the dependency of the oracle point on $\b$ and $A$
and write $\bfdelta(\b, A)$~as~$\bfdelta$.

Notice that the oracle point is unique since $\c^*$ is unique, and
that the oracle point cannot be computed until the optimal
solution $\c^*$ has been computed.  
%
The next result gives a critical relationship involving the oracle point
that is exploited by our active-set method.
%
\begin{theorem}	\label{thm:geometry}
	The solution $\c^*$ to problem \eqref{eq:en} satisfies
	\begin{equation}
	(1-\lambda) \c^* = \T_\lambda(A^\transpose \bfdelta),
	\label{eq:geometry}
	\end{equation}
	where $\T_\lambda(\cdot)$ is the soft-thresholding operator
        (applied componentwise to $A^\transpose \bfdelta$) defined as
        $\T_\lambda(v) = sgn(v) (|v| - \lambda)$ if $|v| >
        \lambda$ and $0$ otherwise.
\end{theorem}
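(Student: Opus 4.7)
The plan is to derive the identity \eqref{eq:geometry} directly from the first-order optimality conditions for the strongly convex problem~\eqref{eq:en}, and then reorganize the resulting coordinate-wise conditions into the soft-thresholding form.

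First, I would write down the subdifferential optimality condition for~\eqref{eq:en}. The objective $f$ is strongly convex (because $\lambda < 1$), so $\c^*$ is its unique minimizer and is characterized by $\mathbf{0} \in \partial f(\c^*)$. Computing the subdifferential termwise gives
\begin{equation*}
\mathbf{0} \in \lambda \,\partial \|\c^*\|_1 + (1-\lambda)\c^* - \gamma A^\transpose (\b - A\c^*).
\end{equation*}
Substituting $\bfdelta = \gamma(\b - A\c^*)$ from Definition~\ref{def:delta} yields the compact form
\begin{equation*}
A^\transpose \bfdelta \in (1-\lambda)\c^* + \lambda \,\partial \|\c^*\|_1.
\end{equation*}

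Next, I would read this inclusion coordinate by coordinate, using the explicit form of $\partial |t|$: it equals $\{\operatorname{sgn}(t)\}$ for $t\neq 0$ and $[-1,1]$ for $t=0$. Writing $v_j := [A^\transpose \bfdelta]_j$, the three cases are: (i) if $c^*_j > 0$, then $v_j = (1-\lambda)c^*_j + \lambda$, so $(1-\lambda)c^*_j = v_j - \lambda > 0$, forcing $v_j > \lambda$; (ii) symmetrically, if $c^*_j < 0$ then $v_j < -\lambda$ and $(1-\lambda)c^*_j = v_j + \lambda$; (iii) if $c^*_j = 0$ then $v_j \in [-\lambda, \lambda]$. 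In all three cases the right-hand side coincides with $\T_\lambda(v_j)$ as defined in the statement, which proves \eqref{eq:geometry}. I would note that the case analysis also yields the converse sign information (e.g.\ $|v_j| > \lambda$ whenever $c^*_j \neq 0$), which is what lets the soft-thresholding formula recover the support of $\c^*$.

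I do not expect any serious obstacle here: the proof is essentially a direct translation of the elastic-net KKT conditions into soft-thresholding notation, and the strong convexity of $f$ (which makes the stationarity condition both necessary and sufficient) removes any concerns about spurious solutions. The only mildly delicate point is being careful with the subgradient at $c^*_j = 0$, in particular verifying that the chosen subgradient is consistent with $v_j\in[-\lambda,\lambda]$; this is the step I would write out with the most care to keep the coordinate-wise case split clean.
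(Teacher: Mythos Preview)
Your proposal is correct and is essentially the same argument as the paper's: both derive the identity from the first-order optimality condition $A^\transpose\bfdelta = (1-\lambda)\c^* + \lambda z$ with $z\in\partial\|\c^*\|_1$, and then invert it coordinate-wise into the soft-thresholding form. The paper phrases this last step as ``apply $\T_\lambda$ to both sides,'' whereas you write out the three-case analysis explicitly, but the content is identical.
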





Theorem \ref{thm:geometry} shows that if the oracle point $\bfdelta$ is
known, the solution $\c^*$ can be written out directly.
Moreover, it follows from \eqref{eq:def-delta} and \eqref{eq:geometry} that $\bfdelta = \0$
if and only if $\b = \0$.

In Figure \ref{fig:geometry}, we depict a two dimensional example of
the solution to the elastic net problem \eqref{eq:en} for different
values of the tradeoff parameter $\lambda$. As
expected, the solution $\c^\ast$ becomes denser as $\lambda$
decreases. Moreover, as predicted by Theorem \ref{thm:geometry}, the
magnitude of the coefficient $c^*_j$ is a decaying function of the
angle between the corresponding dictionary atom $\a_j$ and the
oracle point $\bfdelta$ (shown in red). If $\a_j$ is far enough
from $\bfdelta$ such that $|\langle \a_j, \bfdelta \rangle| \leq \lambda$ holds true, then
the corresponding coefficient $c^*_j$ is zero. We call the region containing
the nonzero coefficients the \emph{oracle region}. We can formally define the oracle region by using the quantity
$\mu(\cdot, \cdot)$ to denote the coherence of two vectors, i.e.,
\begin{equation}
\mu(\v, \w) := \frac{|\langle \v, \w\rangle|}{\|\v\|_2 \|\w\|_2}.
\end{equation}

\begin{definition}[Oracle Region]	\label{def:Delta}
	The oracle region associated with the optimization problem
        \eqref{eq:en} is defined as
	\begin{equation}
	\Delta(\b, A) := \Big\{ \v \in \RR^D \!: \! \|\v\|_2 = 1, \, \mu(\v, \bfdelta) > \frac{\lambda}{\|\bfdelta\|_2}  \Big\}.\!\!
	\label{eq:def-Delta}
	\end{equation}
\end{definition}
The oracle region is composed of an antipodal pair of spherical caps of the unit ball
 of $\RR ^D$ that are located at the symmetric locations $\pm
 \bfdelta / \|\bfdelta\|_2$, both with an angular radius of $\theta = \arccos(
 \lambda / \|\bfdelta\|_2)$ (see Figure \ref{fig:Delta-illustration}).
 From the definition of the oracle region and Theorem
 \ref{thm:geometry}, it follows that $c_j^* \ne 0$ if and only if
 $\a_j \in \Delta(\b,A)$. In other words, the support of the solution
 $\c^*$ are those vectors $\a_j$ in the oracle region.

The oracle region also captures the behavior of the solution when columns from the matrix $A$ are removed or new columns are added. This provides the key insight into designing an active-set method for solving the optimization.


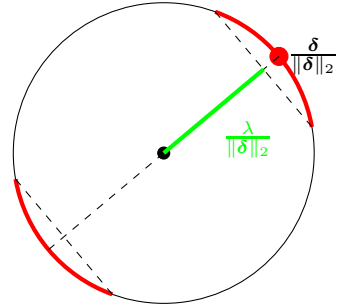
\begin{figure}
	\centering
	
	\def\Sx{1.2}
	\def\Sy{1.2}
	\def\angleL{10}
	\def\angleM{40}
	\def\angleR{70}
	\def\angleLs{190}
	\def\angleMs{220}
	\def\angleRs{250}
	
	\begin{tikzpicture}[scale = 2]
	\coordinate (0) at (0,0);
	\draw[black, fill = none] (0) circle [radius = 1];
	\draw [black, fill=black] (0) circle [radius=0.04];
	\draw [red, fill=red] (cos \angleM, sin \angleM) circle [radius=0.06];
	\node[black, right] at (cos \angleM, sin \angleM) {$\frac{\bfdelta}{\|\bfdelta\|_2}$};
	\draw[solid, ultra thick, red] (0) ([shift=(\angleL:1cm)] 0, 0) arc (\angleL:\angleR:1cm);
	\draw[solid, ultra thick, red] (0) ([shift=(\angleLs:1cm)] 0, 0) arc (\angleLs:\angleRs:1cm);
	\draw [dashed] (cos \angleL, sin \angleL) -- (cos \angleR, sin \angleR);
	\draw [dashed] (0, 0) -- (cos \angleM, sin \angleM);
	
	\draw [dashed] (cos \angleLs, sin \angleLs) -- (cos \angleRs, sin \angleRs);
	\draw [dashed] (0, 0) -- (cos \angleMs, sin \angleMs);
	\draw [green, ultra thick] (0, 0) -- (0.866 * cos \angleM, 0.866 * sin \angleM);
	\node[green, below right] at (0.433 * cos \angleM, 0.433 * sin \angleM) {$\frac{\lambda}{\|\bfdelta\|_2}$};
	\end{tikzpicture}
	\caption{The oracle region $\Delta(\b,A)$ is illustrated in
          red. Note that the size of the oracle region increases
          as the quantity $\lambda / \|\bfdelta\|_2$ decreases, and vice
          versa.}
	\label{fig:Delta-illustration}
\end{figure}


\begin{proposition}
For any $\b\in\RR^D$, $A \in \RR ^{D \times N}$ and $A' \in \RR ^{D \times N'}$, if no column of $A'$ is contained in $\Delta(\b, A) $,
then $\c^*(\b, [A, A']) = [\c^*(\b, A)^\transpose, \0_{N'
	\times 1}^\transpose] ^\transpose$.
	\label{thm:Delta-dynamic-1}
\end{proposition}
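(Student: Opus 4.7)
The plan is to verify that the zero-padded vector $\tilde{\c} := [\c^*(\b,A)^\transpose,\, \0^\transpose]^\transpose$ satisfies the first-order optimality conditions for the augmented problem $\min_{\c} f(\c;~\b,[A,A'])$, and then invoke strong convexity of $f$ to conclude that $\tilde{\c}$ is the unique minimizer $\c^*(\b,[A,A'])$.

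The key starting observation is that zero-padding leaves the residual unchanged, $\b - [A,A']\tilde{\c} = \b - A\c^*(\b,A)$, so the oracle point that $\tilde{\c}$ induces in the augmented problem is precisely $\bfdelta(\b,A)$. Writing the KKT inclusion of the augmented problem coordinate-wise, I then need to verify (i) for each column $\a_j$ of $A$, $\langle \a_j, \bfdelta(\b,A)\rangle - (1-\lambda)\,c^*_j \in \lambda\,\partial|c^*_j|$, and (ii) for each column $\a'_k$ of $A'$, $|\langle \a'_k, \bfdelta(\b,A)\rangle| \le \lambda$. Part (i) is simply the KKT condition for the original problem at its optimum $\c^*(\b,A)$, and so holds by definition; equivalently, it is encoded in the identity $(1-\lambda)\c^*(\b,A) = \T_\lambda(A^\transpose \bfdelta(\b,A))$ given by Theorem~\ref{thm:geometry}.

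Part (ii) is where the hypothesis on $A'$ enters. Using the unit-norm normalization of $\a'_k$, we have $|\langle \a'_k, \bfdelta(\b,A)\rangle| = \|\bfdelta(\b,A)\|_2\,\mu(\a'_k,\bfdelta(\b,A))$, and by Definition~\ref{def:Delta} the assumption $\a'_k \notin \Delta(\b,A)$ is exactly $\mu(\a'_k,\bfdelta(\b,A)) \le \lambda/\|\bfdelta(\b,A)\|_2$, which rearranges to the desired bound. The degenerate case $\bfdelta(\b,A)=\0$ (which by \eqref{eq:geometry} is equivalent to $\b=\0$, and hence $\c^*(\b,A)=\0$) is immediate, since then both sides of (ii) are zero and $\tilde{\c}=\0$ trivially solves the augmented problem.

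The one point requiring a little care, and the main obstacle I anticipate, is that Theorem~\ref{thm:geometry} is stated as a necessary condition satisfied by the optimum; to close the argument I would either work directly with the KKT inclusion as above (whose sufficiency for a strongly convex, subdifferentiable objective is standard), or combine \eqref{eq:geometry} with the affine bijection $\c \mapsto \gamma(\b - [A,A']\c)$ between coefficient vectors and oracle points, using strong convexity of $f$ to promote the fixed-point identity to a sufficient condition that pins down $\tilde{\c}$ as the unique minimizer.
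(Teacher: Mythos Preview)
Your proposal is correct and follows essentially the same route as the paper: verify that the zero-padded vector satisfies the fixed-point characterization of the elastic net minimizer for the augmented dictionary, splitting into the $A$-block (inherited from optimality of $\c^*(\b,A)$) and the $A'$-block (where the hypothesis $\a'_k\notin\Delta(\b,A)$ yields $\T_\lambda(\a_k'^\transpose\bfdelta)=0$), then conclude by uniqueness. The only difference is that the paper packages the ``if and only if'' version of Theorem~\ref{thm:geometry} as a separate lemma in the appendix (Lemma~\ref{thm:basic}), so the sufficiency concern you flag is handled there directly rather than by re-deriving the KKT argument; your proposed workaround via the subdifferential inclusion and strong convexity is exactly how that lemma is proved.
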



The interpretation for Proposition \ref{thm:Delta-dynamic-1} is that
the solution $\c^*(\b, A)$ does not change (modulo padding with
additional zeros) when new
columns are added to the dictionary $A$, as long as the new columns
are not inside the oracle region $\Delta(\b, A)$. From another perspective, $\c^*(\b,
[A, A'])$ does not change if one removes columns from the dictionary
$[A, A']$ that are not in the oracle region $\Delta(\b, [A, A'])$.

\begin{proposition}	\label{thm:Delta-dynamic-2}
For any $\b\in\RR^D$, $A \in \RR ^{D \times N}$ and $A' \in \RR ^{D \times N'}$, denote $\c^*(\b, [A, A']) = [\c_A^\transpose,
\c_{A'}^\transpose]^\transpose$.
If any column of $A'$ lies within $\Delta(\b, A)$ , then $\c_{A'}^\transpose \ne \0$.
\end{proposition}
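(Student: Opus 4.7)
The strategy is to proceed by contradiction, leveraging Theorem~\ref{thm:geometry} together with the uniqueness of the elastic net minimizer. Suppose the conclusion fails, i.e., $\c_{A'} = \0$. Then the vector $[\c_A^\transpose, \0^\transpose]^\transpose$ minimizes $f(\,\cdot\,;\b,[A,A'])$; but plugging a vector with zero $A'$-block into the augmented objective recovers exactly $f(\c_A;\b,A)$ (the extra zeros contribute nothing to the $\ell_1$, $\ell_2$, or residual terms). Hence $\c_A$ minimizes $f(\,\cdot\,;\b,A)$, and by uniqueness $\c_A = \c^*(\b,A)$.

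The next step is to use \eqref{eq:def-delta} to compute the oracle point of the augmented problem under this assumption:
\begin{equation*}
\bfdelta(\b,[A,A']) = \gamma\bigl(\b - [A,A']\,[\c_A^\transpose,\0^\transpose]^\transpose\bigr) = \gamma(\b - A\c_A) = \bfdelta(\b,A),
\end{equation*}
so the two oracle points coincide. I would then apply Theorem~\ref{thm:geometry} to the augmented problem and read off the last $N'$ components, obtaining $(1-\lambda)\,\c_{A'} = \T_\lambda\!\bigl((A')^\transpose \bfdelta(\b,A)\bigr)$. Since by assumption some column $\a'_j$ of $A'$ lies in $\Delta(\b,A)$, we have $\mu(\a'_j,\bfdelta(\b,A)) > \lambda/\|\bfdelta(\b,A)\|_2$, and since $\|\a'_j\|_2 = 1$ this is equivalent to $|\langle \a'_j,\bfdelta(\b,A)\rangle| > \lambda$. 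Therefore $\T_\lambda$ applied to the $j$-th entry of $(A')^\transpose\bfdelta(\b,A)$ is nonzero, contradicting $\c_{A'} = \0$.

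I do not anticipate a genuine obstacle: the argument is essentially a careful bookkeeping of Theorem~\ref{thm:geometry} applied to the augmented dictionary, exploiting that the soft-thresholding characterization makes oracle-region membership an exact iff condition for nonzero entries. The only edge case is $\bfdelta(\b,A) = \0$, which by the remark following Theorem~\ref{thm:geometry} forces $\b = \0$; but then the oracle region is empty and the hypothesis is vacuous, so this case is handled trivially. Matching the strict inequality in the definition of $\Delta(\b,A)$ against the $\le \lambda$ threshold in $\T_\lambda$ is the one spot that requires attention, and it is exactly the strictness that powers the contradiction.
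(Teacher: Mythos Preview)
Your proposal is correct and follows essentially the same route as the paper: assume $\c_{A'}=\0$, deduce $\c_A=\c^*(\b,A)$ and hence $\bfdelta(\b,[A,A'])=\bfdelta(\b,A)$, then read off the $A'$-block of the soft-thresholding identity in Theorem~\ref{thm:geometry} to obtain the contradiction (the paper phrases this as a contrapositive rather than a contradiction, but the logic is identical). Your justification that $\c_A$ must equal $\c^*(\b,A)$ via comparing $f(\c;\b,A)=f([\c^\transpose,\0^\transpose]^\transpose;\b,[A,A'])$ is in fact more explicit than the paper's, which simply asserts this step.
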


This result means that the solution to the elastic net problem will
certainly be changed by adding new columns that lie
within the oracle region to the dictionary.

In the next section, we describe an efficient algorithm for
solving the elastic net problem \eqref{eq:en} that is based on the geometric
structure and concentration behavior of the solution.

\subsection{A new active-set algorithm}

Although the elastic net optimization problem \cite{Zou:JRSS05} has been recently introduced for subspace clustering in \cite{Fang:ICDM12,Fang:TKDE15,Panagakis:PRL14}, such prior work does not
provide an efficient algorithm that can handle large-scale datasets.  
\pagebreak
In fact, such prior work solves the
elastic net problem using existing algorithms that require calculations
involving the full data matrix $A$ (\eg, the accelerated proximal gradient
(APG) \cite{Beck2009} is used in \cite{Fang:ICDM12} and the linearized
alternating direction method (LADM) \cite{Lin:NIPS11} is used in
\cite{Panagakis:PRL14}).
%
Here, we propose to solve the elastic net problem \eqref{eq:en} with
an active-set algorithm that is more efficient than both APG and
LADM, and can handle large-scale datasets.  We call our new method
(see Algorithm~\ref{alg:main})
ORacle Guided Elastic Net solver, or ORGEN for short.

The basic idea behind ORGEN is to solve a sequence of
reduced-scale subproblems defined by an active set that is itself determined
from the oracle region. Let $T_k$ be the active set at iteration $k$.
Then, the next active set $T_{k+1}$ is selected to contain the
indices of columns that are in the oracle region $\Delta(\b,
A_{T_k})$, where $A_{T_k}$ denotes the submatrix of $A$ with
columns indexed by $T_k$. We use Figure
\ref{fig:alg-illustration} for a conceptual illustration. In Figure
\ref{fig:alg-illustration-1} we show the columns of $A$
that correspond to the active set $T_k$ by labeling the corresponding
columns of $A_{T_k}$ in
red. The oracle region $\Delta(\b, A_{T_k})$ is the union of the red
arcs in Figure \ref{fig:alg-illustration-2}. Notice that at the bottom
left there is one red dot that is not in $\Delta(\b, A_{T_k})$ and
thus must not be included in $T_{k+1}$, and two blue dots that are
not in $T_k$ but lie in the oracle region $\Delta(\b, A_{T_k})$ and
thus must be included in $T_{k+1}$. In Figure \ref{fig:alg-illustration-3}
we illustrate $T_{k+1}$ by green dots. This iterative procedure is terminated once
$T_{k+1}$ does not contain any new points, \ie, when $T_{k+1}\subseteq T_k$, at
which time $T_{k+1}$ is the support for $\c^*(\b, A)$.

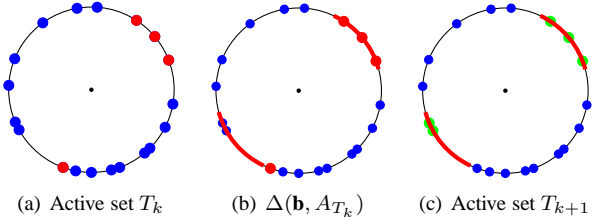
\begin{figure}[t]
	\centering
	\def\nPoint{19}
	\def\primalAngle
	{{21, 40, 57, 86, 100, 126, 158, 177, 203, 209, 250, 259, 270, 284, 290, 310, 315, 333, 350}}
	
	\def\Sx{1.2}
	\def\Sy{1.2}
	
	\subfigure[\label{fig:alg-illustration-1} Active set $T_k$] 
	{
		\begin{tikzpicture}[scale = 1.1]
		\coordinate (0) at (0,0);
		
		\draw[black, fill = none] (0) circle [radius = 1];
		\draw [black, fill=black] (0) circle [radius=0.02];
		\foreach \i in {1, 2, ..., \nPoint}
		{
			\def\angle{\primalAngle[\i-1]}
			\draw [blue, fill=blue] (cos \angle, sin \angle) circle [radius=0.06];
		}
		\foreach \i in {1, 2, 3, 11}
		{
			\def\angle{\primalAngle[\i-1]}
			\draw [red, fill=red] (cos \angle, sin \angle) circle [radius=0.06];
		}
		\end{tikzpicture}
	}
	~
%
	\subfigure[\label{fig:alg-illustration-2} $\Delta(\b, A_{T_k})$]
	{
		\begin{tikzpicture}[scale = 1.1]
		\coordinate (0) at (0,0);
		\def\dualAngle{39.60}
		\draw[black, fill = none] (0) circle [radius = 1];
		\draw [black, fill=black] (0) circle [radius=0.02];
		\foreach \i in {1, 2, ..., \nPoint}
		{
			\def\angle{\primalAngle[\i-1]}
			\draw [blue, fill=blue] (cos \angle, sin \angle) circle [radius=0.05];
		}
		\foreach \i in {1, 2, 3, 11}
		{
			\def\angle{\primalAngle[\i-1]}
			\draw [red, fill=red] (cos \angle, sin \angle) circle [radius=0.06];
		}
		\draw[solid, ultra thick, red] (0) ([shift=(15:1cm)] 0, 0) arc (15:65:1cm);
		\draw[solid, ultra thick, red] (0) ([shift=(195:1cm)] 0, 0) arc (195:245:1cm);
		\end{tikzpicture}
	}
	~
	\subfigure[\label{fig:alg-illustration-3} Active set $T_{k+1}$]
	{
		\begin{tikzpicture}[scale = 1.1]
		\coordinate (0) at (0,0);
		\def\dualAngle{39.60}
		\draw[black, fill = none] (0) circle [radius = 1];
		\draw [black, fill=black] (0) circle [radius=0.02];
		\foreach \i in {1, 2, ..., \nPoint}
		{
			\def\angle{\primalAngle[\i-1]}
			\draw [blue, fill=blue] (cos \angle, sin \angle) circle [radius=0.05];
		}
		\foreach \i in {1, 2, 3, 9, 10}
		{
			\def\angle{\primalAngle[\i-1]}
			\draw [green, fill=green] (cos \angle, sin \angle) circle [radius=0.06];
		}
		\draw[solid, ultra thick, red] (0) ([shift=(15:1cm)] 0, 0) arc (15:65:1cm);
		\draw[solid, ultra thick, red] (0) ([shift=(195:1cm)] 0, 0) arc (195:245:1cm);
		\end{tikzpicture}
	}
\caption{Conceptual illustration of the ORGEN algorithm. All the dots on the unit circle illustrate the dictionary $A$. (a) active set $T_k$ at step $k$, illustrated by red dots. (b) The oracle region $\Delta(\b, A_{T_k})$ illustrated by red arcs. (c) The new active set $T_{k+1}$ illustrated in green, which is the set of indices of points that are in $\Delta(\b, A_{T_k})$. }
\label{fig:alg-illustration}
	\vspace{-3mm}
\end{figure}

\begin{algorithm}
	\caption{ORacle Guided Elastic Net (ORGEN) solver}
	\label{alg:main}
	\begin{algorithmic}[1]
		\REQUIRE $A = [\a_1, \dots, \a_N] \in \RR ^{D \times N}$, $\b \in \RR ^D$, $\lambda$ and $\gamma$.
		\STATE Initialize the support set $T_0$ and set $k \leftarrow 0$. 
		\LOOP
		\STATE \label{step:solve-subproblem}Compute $\c^*(\b, A_{T_k})$ as in \eqref{eq:en} using any solver.
		\STATE Compute $\bfdelta(\b, A_{T_k})$ from $\c^*(\b, A_{T_k})$ as in \eqref{eq:def-delta}.
		\STATE \label{step:update-support} Active set update: $T_{k+1} \leftarrow \{ j: \a_j \in \Delta(\b, A_{T_k}) \}$.
		\STATE If $T_{k+1} \subseteq T_k$, terminate; otherwise set $k \leftarrow k+1$.
		\ENDLOOP
		\ENSURE A vector $\c$ such that $\c_{T_k} = \c^*(\b, A_{T_k})$ and zeros otherwise. Its support is $T_{k+1}$.
	\end{algorithmic}
\end{algorithm}

The next lemma helps explain why ORGEN converges.

\begin{lemma} \label{lem:decrease}
In Algorithm~\ref{alg:main}, if $T_{k+1} \nsubseteq T_k$, then
$$
f(\c^*(\b,A_{T_{k+1}});\b,A_{T_{k+1}}) < f(\c^*(\b,A_{T_k});\b,A_{T_k}).
$$
\end{lemma}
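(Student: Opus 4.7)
The plan is to exhibit a feasible vector $\tilde\c$ for the $T_{k+1}$ subproblem whose objective equals the optimal value of the $T_k$ subproblem, and then argue that $\tilde\c$ cannot itself be optimal, so that strong convexity of $f(\,\cdot\,;\b,A_{T_{k+1}})$ forces a strict decrease. To construct $\tilde\c$, first note that by Theorem~\ref{thm:geometry} the entry $c_j^*(\b,A_{T_k})$ is nonzero iff $\a_j\in\Delta(\b,A_{T_k})$, so combined with the definition of $T_{k+1}$ in Step~\ref{step:update-support} the support of $\c^*(\b,A_{T_k})$ equals $T_k\cap T_{k+1}$. Define $\tilde\c\in\RR^{|T_{k+1}|}$ to agree with $\c^*(\b,A_{T_k})$ on $T_k\cap T_{k+1}$ and to be zero on $T_{k+1}\setminus T_k$. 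Since the entries of $\c^*(\b,A_{T_k})$ on $T_k\setminus T_{k+1}$ are already zero, discarding them changes neither $A\c$ nor the $\ell_1$ or $\ell_2$ norms, yielding
\begin{equation*}
f(\tilde\c;\b,A_{T_{k+1}}) \;=\; f(\c^*(\b,A_{T_k});\b,A_{T_k}).
\end{equation*}

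The crux of the proof is to show that $\tilde\c\neq\c^*(\b,A_{T_{k+1}})$. The residual satisfies $\b-A_{T_{k+1}}\tilde\c = \b-A_{T_k}\c^*(\b,A_{T_k})$, so if $\tilde\c$ were itself the optimal solution on $T_{k+1}$, its induced oracle point would equal $\bfdelta(\b,A_{T_k})$ and the corresponding oracle region would coincide with $\Delta(\b,A_{T_k})$. Since $T_{k+1}\nsubseteq T_k$ by hypothesis, there exists some $j\in T_{k+1}\setminus T_k$, and by the construction of $T_{k+1}$ we have $\a_j\in\Delta(\b,A_{T_k})$. Applying Theorem~\ref{thm:geometry} to $\tilde\c$ would then force $\tilde c_j\neq 0$, contradicting $\tilde c_j=0$. (Equivalently, one may invoke Proposition~\ref{thm:Delta-dynamic-2} with base dictionary $A_{T_k\cap T_{k+1}}$ and extension $A_{T_{k+1}\setminus T_k}$, after observing via the restriction of $\c^*(\b,A_{T_k})$ to its support that $\bfdelta(\b,A_{T_k\cap T_{k+1}})=\bfdelta(\b,A_{T_k})$, so that every column of the extension lies in $\Delta(\b,A_{T_k\cap T_{k+1}})$.)

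Strong convexity of $f(\,\cdot\,;\b,A_{T_{k+1}})$ guarantees a unique minimizer, so any feasible non-minimizer has strictly larger objective value; combining this with the displayed equality above yields
\begin{equation*}
f(\c^*(\b,A_{T_{k+1}});\b,A_{T_{k+1}}) < f(\tilde\c;\b,A_{T_{k+1}}) = f(\c^*(\b,A_{T_k});\b,A_{T_k}),
\end{equation*}
as desired. The main obstacle is the self-consistency step above: the candidate $\tilde\c$ would predict exactly the oracle region used to build $T_{k+1}$ in the first place, and it is only the hypothesis $T_{k+1}\nsubseteq T_k$ that makes this circularity inconsistent with optimality of $\tilde\c$.
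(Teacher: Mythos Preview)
Your proof is correct and follows essentially the same approach as the paper: both construct the candidate $\tilde\c$ supported on $S=T_k\cap T_{k+1}$ (the paper calls it $[\c^*(\b,A_S)^\transpose,\0^\transpose]^\transpose$), verify that its objective on $T_{k+1}$ matches the optimal value on $T_k$, and then use the existence of an index in $T_{k+1}\setminus T_k$ lying in $\Delta(\b,A_{T_k})$ to rule out optimality of $\tilde\c$. The only cosmetic difference is that the paper packages the two steps through Propositions~\ref{thm:Delta-dynamic-1} and~\ref{thm:Delta-dynamic-2}, whereas you appeal directly to Theorem~\ref{thm:geometry} (and mention Proposition~\ref{thm:Delta-dynamic-2} as an alternative); the underlying argument is identical.
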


The following convergence result holds for ORGEN.

\begin{theorem}	\label{thm:alg-convergence}
	Algorithm \ref{alg:main} converges to the optimal solution $\c^*(\b, A)$ in a finite number of iterations.
\end{theorem}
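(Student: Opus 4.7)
The plan is to combine Lemma~\ref{lem:decrease} with a finiteness argument to obtain termination, and then invoke Proposition~\ref{thm:Delta-dynamic-1} to certify that the terminal iterate equals $\c^*(\b, A)$.

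For finite termination, I would argue by contradiction. Suppose ORGEN never meets its stopping criterion, so that $T_{k+1} \nsubseteq T_k$ for every $k \ge 0$. Applying Lemma~\ref{lem:decrease} at every iteration produces a strictly decreasing sequence of real numbers $f_k := f(\c^*(\b, A_{T_k});\, \b, A_{T_k})$. In particular the active sets $T_0, T_1, T_2, \ldots$ must be pairwise distinct, since $T_k = T_{k'}$ would force $f_k = f_{k'}$. But every $T_k$ is a subset of $\{1, \ldots, N\}$, and there are only $2^N$ such subsets, contradicting pairwise distinctness of an infinite sequence. Hence ORGEN terminates after some finite number of iterations, say at iteration $K$, with $T_{K+1} \subseteq T_K$.

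For correctness at termination, I would observe that $T_{K+1} \subseteq T_K$ forces every column $\a_j$ with $j \notin T_K$ to lie outside the oracle region $\Delta(\b, A_{T_K})$; otherwise such a $j$ would belong to $T_{K+1} \setminus T_K$, contradicting $T_{K+1} \subseteq T_K$. Let $A'$ collect the columns of $A$ indexed by $\{1, \ldots, N\} \setminus T_K$. Since no column of $A'$ lies in $\Delta(\b, A_{T_K})$, Proposition~\ref{thm:Delta-dynamic-1} yields
\[
\c^*(\b, [A_{T_K}, A']) = [\c^*(\b, A_{T_K})^\transpose,\, \0^\transpose]^\transpose.
\]
Since $[A_{T_K}, A']$ is a column permutation of $A$ and problem~\eqref{eq:en} is equivariant under a joint permutation of the columns of $A$ and the coordinates of $\c$, the vector returned by ORGEN, whose $T_K$-entries are $\c^*(\b, A_{T_K})$ and whose remaining entries are zero, coincides with $\c^*(\b, A)$.

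The main subtlety, rather than a genuine obstacle, is ruling out cycling in the sequence of active sets; this is handled cleanly by the strict monotonicity of $f_k$ provided by Lemma~\ref{lem:decrease}. Everything else is a direct application of Proposition~\ref{thm:Delta-dynamic-1}, together with the harmless observation that permuting the dictionary does not affect the elastic net optimum beyond the corresponding permutation of coordinates.
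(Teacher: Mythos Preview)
Your proposal is correct and follows essentially the same route as the paper: strict decrease from Lemma~\ref{lem:decrease} plus finiteness of subsets of $\{1,\ldots,N\}$ forces termination, and then the termination condition $T_{K+1}\subseteq T_K$ certifies optimality. The only minor difference is in how the second step is packaged: you invoke Proposition~\ref{thm:Delta-dynamic-1} (together with permutation equivariance) to conclude that padding $\c^*(\b,A_{T_K})$ with zeros yields $\c^*(\b,A)$, whereas the paper's appendix verifies the optimality condition of Lemma~\ref{thm:basic} coordinate-by-coordinate via Theorem~\ref{thm:geometry}; since Proposition~\ref{thm:Delta-dynamic-1} is itself proved from Lemma~\ref{thm:basic}, the two arguments are equivalent in content.
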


The result follows from
Lemma~\ref{lem:decrease}, because it
implies that an active set can never be repeated.  Since there are
only finitely many distinct active sets, the algorithm must eventually
terminate with $T_{k+1} \subseteq T_k$.  The remaining part of the
proof establishes that if $T_{k+1} \subseteq T_k$, then
$\c^*(\b,A_{T_{k+1}})$ gives the nonzero entries of the solution.

ORGEN solves large-scale problems by solving a sequence of
reduced-size problems in step \ref{step:solve-subproblem} of
Algorithm~\ref{alg:main}. If the active set $T_k$ is small,
then step \ref{step:solve-subproblem} is a small-scale problem that
can be efficiently solved.
However, there is no procedure in Algorithm \ref{alg:main} that
explicitly controls the size of $T_k$. To address this concern, we
propose an alternative to
step \ref{step:update-support} in which only
a small number of new points---the ones most correlated with
$\bfdelta$---are added.  Specifically,
\begin{multline} \label{eq:alg-step5-alternative}
\text{\ref{step:update-support}'}\!: T_{k+1} = \{ j \in T_k : \a _j
\in \Delta(\b, A_{T_k})\} \cup S_k, 
\end{multline}
where
$S_k$ holds the indices of the largest $n$ entries in
$\{ |\a _j^\transpose\bfdelta(\b, A_{T_k})|: j \notin T_k, \, \a _j \in \Delta(\b, A_{T_k}) \}$;
%
ideally, $n$ should be chosen so that the size of $T_{k+1}$ is bounded by
a predetermined value $N_{\max}$ that represents the maximum
size subproblem that can be handled in step
\ref{step:solve-subproblem}. If $N_{\max}$ is chosen large
enough that the second set in the union in
\eqref{eq:alg-step5-alternative} is non-empty, then our convergence
result still holds.

\myparagraph{Initialization}
We suggest the following procedure for computing the initial active
set $T_0$.  First, compute the solution to
\eqref{eq:en} with $\lambda = 0$, which has a closed form solution and can be computed efficiently if the ambient dimension $D$ of the data is not too big.
Then, the $l$ largest entries
(in absolute value) of the solution for some pre-specified value $l$
are added to $T_0$.
Our experiments suggest that this strategy promotes fast
convergence of Algorithm~\ref{alg:main}.


\section{Elastic Net Subspace Clustering (EnSC)}
\label{sec:EnSC}


Although the elastic net has been recently introduced for subspace clustering in \cite{Panagakis:PRL14, Fang:TKDE15}, these works do not provide conditions under which the affinity is guaranteed to be subspace preserving or potential improvements in connectivity. In this section, we give conditions for the affinity to be subspace preserving and for the balance between the subspace-preserving and connectedness properties. To the best of our knowledge, this is the first time that such theoretical guarantees have been established.

We first formally define the subspace clustering problem.
\begin{problem}[\bf Subspace Clustering]
	\label{pro:problem}
	Let $X \in \RR^{D \times N}$ be a real-valued matrix whose columns are drawn from a union of $n$ subspaces of $\RR^D$, say $\bigcup_{\ell=1}^n \S_\ell$, where the dimension $d_\ell$ of the $\ell$-th subspace satisfies $d_\ell < D$ for $\ell=1,\dots,n$. The goal of subspace clustering is to segment the columns of $X$ into their representative subspaces.
\end{problem}

Let $X = [\x_1, \cdots, \x_N]$, where each $\x_j$ is assumed to be of unit norm. Using the same notation as for \eqref{eq:en}, the proposed EnSC computes $\c^*(\x_j, X_{-j})$ for each $\{\x_j\}_{j = 1}^N$, i.e.,
\begin{equation}
\c^*(\x_j, X_{-j}) = \argmin_{\c} f(\c; \x_j, X_{-j}),
\end{equation}
where $X_{-j}$ is $X$ with the $j$-th column removed.
In this section, we focus on a given vector, say $\x_j$.  We suppose that $\x_j \in \S_\ell$ for some $\ell$, and use $X_{-j} ^\ell$ to denote the submatrix of $X$ with columns from $S_\ell$ except that $\x_j$ is removed.
Since our goal is to use the entries of $\c^*(\x_j, X_{-j})$ to construct an affinity graph in which only points in the same subspace are connected, we desire the nonzero entries of $\c^*(\x_j, X_{-j})$ to be a subset of the columns $X_{-j} ^\ell$ so that no connections are built between points from different subspaces. If this is the case, we say that such a solution $\c^*(\x_j, X_{-j})$ is \emph{subspace preserving}. On the other hand, we also want the nonzero entries of $\c^*(\x_j, X_{-j})$ to be as dense as possible in $X_{-j} ^\ell$ so that within each cluster the affinity graph is well-connected\footnote{\modify{In fact, even when each cluster is well-connected, further improving connectivity within clusters is still beneficial since it enhances the ability of the subsequent step of spectral clustering in correcting any erroneous connections in the affinity graph \cite{vonLuxburg:StatComp2007,Vidal:GPCAbook}.}}. To some extent, these are conflicting goals: if the connections are few, it is more likely that the solution is subspace preserving, but the affinity graph of each cluster is not well connected. Conversely, as one builds more connections, it is more likely that some of them will be false, but the connectivity is improved.


In the next two sections, we give a geometric interpretation of the tradeoff between the subspace preserving and connectedness properties, and provide sufficient conditions for a representation to be subspace preserving.


\subsection{Subspace-preserving vs. connected solutions}



Our analysis is built upon the optimization problem $\min_{\c} f(\c; \x_j, X_{-j} ^\ell)$. Note that its solution is trivially subspace preserving since the dictionary $X_{-j}^\ell$ is contained in $\S _\ell$. We then treat all points from other subspaces as newly added columns to $X_{-j} ^\ell$ and apply Propositions \ref{thm:Delta-dynamic-1} and \ref{thm:Delta-dynamic-2}. We get the following geometric result.

\begin{lemma}	\label{thm:subspace-preserving-lemma}
	Suppose that $\x_j \in \S_\ell$.  Then, the vector $\c^*(\x_j, X_{-j})$ is subspace preserving if and only if $\x_k \notin \Delta(\x_j, X_{-j} ^\ell)$ for all $\x_k \notin \S_\ell$.
\end{lemma}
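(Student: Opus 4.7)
The plan is to reduce this lemma to a direct application of Propositions~\ref{thm:Delta-dynamic-1} and~\ref{thm:Delta-dynamic-2}, by viewing $X_{-j}$ as a dictionary built from two disjoint blocks: the columns inside $\S_\ell$, which form $A := X_{-j}^\ell$, and the columns from other subspaces, which I collect into a matrix $A'$. After a harmless column permutation (which affects neither the elastic net objective nor the notion of subspace-preservation), one has $X_{-j} = [A, A']$, and the solution splits conformally as $\c^*(\x_j, X_{-j}) = [\c_A^\transpose, \c_{A'}^\transpose]^\transpose$. By definition, $\c^*(\x_j, X_{-j})$ is subspace preserving if and only if $\c_{A'} = \0$, since every column of $A$ lies in $\S_\ell$ and every column of $A'$ does not. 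This bookkeeping is the only real setup required.

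For the ``if'' direction, assume that no column $\x_k$ of $A'$ lies in $\Delta(\x_j, A) = \Delta(\x_j, X_{-j}^\ell)$. Then Proposition~\ref{thm:Delta-dynamic-1}, applied with $\b = \x_j$, gives
\begin{equation*}
\c^*(\x_j, [A, A']) = [\c^*(\x_j, A)^\transpose, \,\0_{N' \times 1}^\transpose]^\transpose,
\end{equation*}
so $\c_{A'} = \0$ and the solution is subspace preserving, as required.

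For the ``only if'' direction, I argue by contrapositive. Suppose some column $\x_k$ of $A'$ (which by construction satisfies $\x_k \notin \S_\ell$) lies in $\Delta(\x_j, A)$. Proposition~\ref{thm:Delta-dynamic-2} then directly yields $\c_{A'} \neq \0$, meaning that $\c^*(\x_j, X_{-j})$ places a nonzero coefficient on a column outside $\S_\ell$ and therefore fails to be subspace preserving. Combining the two directions proves the lemma.

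The main obstacle is essentially non-existent: the two propositions have been set up precisely so that the subspace-preservation property of $\c^*(\x_j, X_{-j})$ can be read off the geometry of the oracle region $\Delta(\x_j, X_{-j}^\ell)$. The only care required is the initial partition-and-permute step, which ensures that the ``$A$ vs.\ $A'$'' decomposition in the two propositions corresponds exactly to the ``inside $\S_\ell$ vs.\ outside $\S_\ell$'' decomposition used in the statement of the lemma.
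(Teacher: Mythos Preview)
Your proof is correct and takes essentially the same approach as the paper: both partition $X_{-j}$ into the in-subspace block $X_{-j}^\ell$ and the out-of-subspace block, handle the column permutation, and then invoke Proposition~\ref{thm:Delta-dynamic-1} for the ``if'' direction and Proposition~\ref{thm:Delta-dynamic-2} (via contrapositive) for the ``only if'' direction.
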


We illustrate the geometry implied by Lemma~\ref{thm:subspace-preserving-lemma} in Figure \ref{fig:geometry-SC}, where we assume $\S_\ell$ is a two dimensional subspace in $\RR^3$. The dictionary $X_{-j} ^\ell$ is represented by the blue dots in the plane and the oracle region $\Delta(\x_j, X_{-j} ^\ell)$ is denoted as the two red circles. The green dots are all other points in the dictionary. Lemma \ref{thm:subspace-preserving-lemma} says that $\c^*(\x_j, X_{-j})$ is subspace preserving if and only if all green dots lie outside of the red region.

\begin{figure}[t]
	\centering
	\def\nPoint{19}
	\def\primalAngle
	{{21, 30, 57, 86, 100, 126, 158, 177, 203, 209, 250, 259, 270, 284, 290, 310, 315, 333, 350}}
	\def\dualAngle{39.60}
	\def\Gamma{21.6}
	
	\tdplotsetmaincoords{65}{10}
	\begin{tikzpicture}[tdplot_main_coords, scale = 2.0]
	\def\R{1}
	
	\draw [black, fill=black] plot [mark=*, mark size=0.6] coordinates{(0, 0, 0)};
	\draw plot [domain = 0:360, samples = 90, variable = \i]
	(\R*cos \i, \R*sin \i, 0) -- cycle;
	\foreach \i in {0, 30,...,150}
	\draw [dotted] plot [domain = 0:360, samples = 60, variable = \j]
	(\R*cos \i*sin \j,\R*sin \i*sin \j, \R*cos \j);
	\foreach \j in {30, 60, ..., 150}
	\draw [dotted] plot [domain = 0:360, samples = 60, variable = \i]
	(\R*cos \i*sin \j,\R*sin \i*sin \j, \R*cos \j);
	\def\Sx{1.3}
	\def\Sy{1.7}
	\filldraw[draw=none,fill=gray!20, opacity=0.2]
	(-\Sx,-\Sy,0) -- (-\Sx, \Sy, 0) -- (\Sx, \Sy, 0) -- (\Sx, -\Sy, 0) -- cycle;
	\node[black] at (1.1, 1.1, 0) {$\S_\ell$};
	
	\draw [red, fill = red, fill opacity = 0.5]
	plot [domain = 0:360, samples = 40, variable = \i]
	(cos \dualAngle * cos \Gamma - sin \dualAngle * sin \Gamma * cos \i,
	sin \dualAngle * cos \Gamma + cos \dualAngle * sin \Gamma * cos \i,
	sin \Gamma * sin \i) -- cycle;
	\draw [red, fill = red, fill opacity = 0.5]
	plot [domain = 0:360, samples = 40, variable = \i]
	(- cos \dualAngle * cos \Gamma + sin \dualAngle * sin \Gamma * cos \i,
	- sin \dualAngle * cos \Gamma - cos \dualAngle * sin \Gamma * cos \i,
	sin \Gamma * sin \i) -- cycle;
	
	\foreach \j in {1, 2, ..., \nPoint}
	{
		\def\angle{\primalAngle[\j-1]}
		\draw [blue, fill=blue] plot [mark=*, mark size=0.6] coordinates{(cos \angle, sin \angle, 0)};
	}
	\foreach \j/\i in {30/30, 20/120, 70/300, 75/200, 120/35, 170/330, 150/120, 160/45}
	{
		\draw [green, fill=green] plot [mark=*, mark size=0.6] coordinates{(cos \i*sin \j,sin \i*sin \j, cos \j)};
		\draw [dashed, green] (0,0,0) -- (cos \i*sin \j,sin \i*sin \j, cos \j);
	}
	\end{tikzpicture}
	\caption{The structure of the solution for an example in $\RR^3$ associated with a point $\x_j$ (not shown) that lies in the 2-dimensional subspace $\S_\ell$.  The blue dots illustrate the columns of $X_{-j} ^\ell$, the union of the two red regions is the oracle region $\Delta(\x_j, X_{-j} ^\ell)$, and the green points are vectors from other subspaces.}
	\label{fig:geometry-SC}
\end{figure}
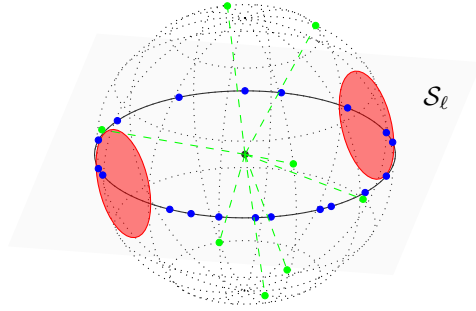

To ensure that a solution is subspace preserving one desires a small oracle region, while to ensure connectedness one desires a large oracle region. These facts again highlight the trade-off between these two properties.
Recall that the elastic net balances $\ell_1$ regularization (promotes sparse solutions) and $\ell_2$ regularization (promotes dense solutions). Thus, one should expect that the oracle region will decrease in size as $\lambda$ is increased from $0$ towards $1$. Theorem~\ref{thm:bound-region} formalizes this claim, but first we need the following definition that characterizes the distribution of the data in $X_{-j} ^\ell$.

\begin{definition}[inradius]	\label{def:inradius}
The inradius of a convex body $\P$ is the radius $r(\P)$ of the largest $\ell_2$ ball inscribed in $\P$.
\end{definition}

To understand the next result, we comment that the size of the oracle region $\Delta(\x_j, X_{-j} ^\ell)$ is controlled by the quantity $\lambda/\|\bfdelta(\x_j, X_{-j} ^\ell)\|_2$ as depicted in Figure \ref{fig:Delta-illustration}. 
\begin{theorem}	\label{thm:bound-region}
	If $\x_j\in\S_\ell$, then
	\begin{equation}
	\frac{\lambda}{\|\bfdelta(\x_j, X_{-j} ^\ell)\|_2} \ge  \frac{r_j ^2 }{r_j + \frac{1-\lambda}{\lambda}},
	\label{eq:bound-region}
	\end{equation}
	where $r_j$ is the inradius of the convex hull of the symmetrized points in $X_{-j} ^\ell$, i.e.,
	\begin{equation}
		r_j := r(\conv\{ \pm\x_k: \x_k \in \S_\ell \ \text{and} \ k \neq j \}).
		\label{eq:def-r_j}
	\end{equation}
We define the right-hand-side of \eqref{eq:bound-region} to be zero
when $\lambda=0$.
\end{theorem}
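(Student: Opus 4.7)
The plan is to combine two inequalities about the oracle point $\bfdelta = \bfdelta(\x_j, X_{-j}^\ell) = \gamma(\x_j - X_{-j}^\ell\c^*)$, which lies in $\S_\ell$ because $\x_j\in\S_\ell$ and every column of $X_{-j}^\ell$ lies in $\S_\ell$. The first ingredient is a polar-duality lower bound on $\|(X_{-j}^\ell)^\transpose\bfdelta\|_\infty$ driven by the inradius $r_j$, matched against the KKT upper bound on the same quantity from Theorem~\ref{thm:geometry}. The second ingredient, obtained from the primal identity $\x_j = X_{-j}^\ell\c^* + \bfdelta/\gamma$ together with KKT stationarity, is a quadratic lower bound on $\|\bfdelta\|_2$ in terms of $\|\c^*\|_\infty$. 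A short algebraic combination of the two delivers the claim.

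\textbf{Step 1: linking $\|\bfdelta\|_2$ to $\|\c^*\|_\infty$.} Since the Euclidean ball of radius $r_j$ in $\S_\ell$ is inscribed in $\conv\{\pm\x_k : \x_k\in\S_\ell, k\ne j\}$, I can write $r_j\,\bfdelta/\|\bfdelta\|_2 = \sum_k \beta_k \epsilon_k \x_k$ with $\beta_k\ge 0$, $\epsilon_k\in\{\pm 1\}$, and $\sum_k \beta_k = 1$; taking inner product with $\bfdelta/\|\bfdelta\|_2$ gives the standard polar-duality inequality $\|(X_{-j}^\ell)^\transpose\bfdelta\|_\infty \ge r_j\|\bfdelta\|_2$. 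Theorem~\ref{thm:geometry} applied to $(\b, A) = (\x_j, X_{-j}^\ell)$ gives $(1-\lambda)\c^* = \T_\lambda((X_{-j}^\ell)^\transpose\bfdelta)$, from which $\|(X_{-j}^\ell)^\transpose\bfdelta\|_\infty \le \lambda + (1-\lambda)\|\c^*\|_\infty$. Chaining the two bounds produces
\begin{equation*}
r_j\|\bfdelta\|_2 \;\le\; \lambda + (1-\lambda)\|\c^*\|_\infty. \qquad (\mathrm{I})
\end{equation*}

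\textbf{Step 2: a quadratic lower bound on $\|\bfdelta\|_2$.} Taking the inner product of $\x_j = X_{-j}^\ell\c^* + \bfdelta/\gamma$ with $\bfdelta$, and using the coordinatewise KKT identity $c_k^*\bigl((X_{-j}^\ell)^\transpose\bfdelta\bigr)_k = \lambda|c_k^*| + (1-\lambda)(c_k^*)^2$ (a direct consequence of Theorem~\ref{thm:geometry}), yields $\x_j^\transpose\bfdelta = \lambda\|\c^*\|_1 + (1-\lambda)\|\c^*\|_2^2 + \|\bfdelta\|_2^2/\gamma$. The left side is at most $\|\bfdelta\|_2$ by Cauchy-Schwarz (since $\|\x_j\|_2 = 1$); dropping the nonnegative $\|\bfdelta\|_2^2/\gamma$ and using $\|\c^*\|_1\ge\|\c^*\|_\infty$ together with $\|\c^*\|_2^2\ge\|\c^*\|_\infty^2$ gives
\begin{equation*}
\|\bfdelta\|_2 \;\ge\; \|\c^*\|_\infty\bigl[\lambda + (1-\lambda)\|\c^*\|_\infty\bigr]. \qquad (\mathrm{II})
\end{equation*}

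\textbf{Step 3: combination, and the main obstacle.} If $r_j\|\bfdelta\|_2\le\lambda$, then $\lambda/\|\bfdelta\|_2 \ge r_j \ge r_j^2/(r_j + (1-\lambda)/\lambda)$ and the bound is trivial, so assume $r_j\|\bfdelta\|_2>\lambda$. Then (I) gives $\|\c^*\|_\infty \ge (r_j\|\bfdelta\|_2 - \lambda)/(1-\lambda) > 0$, and substituting this into the monotone-increasing right-hand side of (II) makes the bracket $[\lambda + (1-\lambda)\|\c^*\|_\infty]$ collapse to $r_j\|\bfdelta\|_2$. Dividing through by $\|\bfdelta\|_2 > 0$ reduces (II) to $(1-\lambda) \ge r_j(r_j\|\bfdelta\|_2 - \lambda)$, i.e., $\|\bfdelta\|_2 \le (\lambda r_j + 1-\lambda)/r_j^2$, which is equivalent to the claimed bound on $\lambda/\|\bfdelta\|_2$. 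The main subtlety is preserving the full quadratic form of (II): using only the linear slack $\|\bfdelta\|_2 \ge \lambda\|\c^*\|_\infty$ would yield the strictly weaker $\lambda/\|\bfdelta\|_2 \ge r_j - (1-\lambda)/\lambda$, so it is the $(1-\lambda)\|\c^*\|_\infty^2$ term, coming from not discarding $\|\c^*\|_2^2$ in Step 2, that produces the sharp denominator $r_j + (1-\lambda)/\lambda$ appearing in the theorem.
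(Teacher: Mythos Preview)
Your proof is correct and follows essentially the same route as the paper's: both derive the key inequality $\lambda\|\c^*\|_1 + (1-\lambda)\|\c^*\|_2^2 \le \|\bfdelta\|_2$ from the KKT identity (your Step~2), lower-bound it by the single maximal coordinate of $\c^*$, and couple this with the polar-duality bound $\|(X_{-j}^\ell)^\transpose\bfdelta\|_\infty \ge r_j\|\bfdelta\|_2$ (your Step~1). The paper organizes the argument slightly differently, first proving the sharper intermediate bound $\|\bfdelta\|_2 \le (\lambda\kappa + 1-\lambda)/\kappa^2$ in terms of $\kappa := \max_k \mu(\x_k,\bfdelta)$---which it then reuses for Theorem~\ref{thm:subspace-preserving-condition}---and only afterward weakening via $\kappa \ge r_j$; you instead fold $r_j$ in from the outset via $\|\c^*\|_\infty$, but the underlying computation is identical.
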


The above theorem allows us to determine an upper bound for the size of the oracle region. This follows since a lower bound on the size of $\lambda / \|\bfdelta(\x_j, X_{-j} ^\ell)\|_2$ implies an upper bound on the size of the oracle region (see \eqref{eq:def-Delta} and Figure~\ref{fig:Delta-illustration}).
Also notice that
the right hand side of \eqref{eq:bound-region} is in the range $[0, r_j)$ and is monotonically increasing with $\lambda$. Thus, it provides an upper bound on the area of the oracle region, which decreases as $\lambda$ increases. This highlights that the trade-off between the subspace-preserving and connectedness properties is controlled by $\lambda$.

\newcommand{\m}{\phantom{-}}

\begin{remark}
It would be nice if $\lambda / \|\bfdelta(\x_j, X_{-j} ^\ell)\|_2$ was increasing as a function of $\lambda$ (we already know that its lower bound given in Theorem~\ref{thm:bound-region} is increasing in $\lambda$). However,
one can show using the data $\x_j = [0.22, 0.72, 0.66]^\transpose$,
	\begin{equation}
	X_{-j} ^\ell = \left[
	\begin{array}{cccc}
	-0.55 & -0.82 & -0.05 & 0.22 \\
	\m 0.22  & \m0.57 & \m0.84 & 0.78 \\
	-0.80 & \m0.00 & \m0.55 & 0.58
	\end{array}
	\right],
	\end{equation}
and parameter choice $\gamma = 10$, that $\lambda/\|\bfdelta\|$ (with $\lambda = 0.88$) is larger than $\lambda/\|\bfdelta\|$ (with $\lambda = 0.95$).
\end{remark}

%
%

\subsection{Conditions for a subspace-preserving solution}

A sufficient condition for a solution to be subspace preserving is obtained by combining the geometry in Lemma \ref{thm:subspace-preserving-lemma} with the bound on the size of the oracle region implied by Theorem \ref{thm:bound-region}.

\begin{theorem}	\label{thm:subspace-preserving-condition-inradius}
	Let $\x_j \in\S_\ell$, $\bfdelta_j = \bfdelta(\x_j, X_{-j} ^\ell)$ be the oracle point, and $r_j$ be the inradius characterization of $X_{-j} ^\ell$ as given by  \eqref{eq:def-r_j}. Then, $\c^*(\x_j, X_{-j})$ is subspace preserving if
	\begin{equation}	\label{eq:subspace-preserving-condition-inradius}
	\max_{k: \x_k \notin \S_\ell}\mu(\x_k, \bfdelta_j) \le \frac{r_j^2}{r_j + \frac{1-\lambda}{\lambda}}.
	\end{equation}	
\end{theorem}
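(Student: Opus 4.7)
The plan is to chain together Lemma~\ref{thm:subspace-preserving-lemma}, the definition of the oracle region, and the inradius bound from Theorem~\ref{thm:bound-region}. Each of these gives one ``link'' in the logical argument, and the proof should be essentially a direct substitution; no genuinely new ideas are needed beyond carefully tracking inequalities.

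First, I would invoke Lemma~\ref{thm:subspace-preserving-lemma}, which reduces the subspace-preserving property of $\c^*(\x_j, X_{-j})$ to the geometric statement that every external column $\x_k \notin \S_\ell$ lies outside the oracle region $\Delta(\x_j, X_{-j}^\ell)$. Next, I would unpack the definition of $\Delta(\x_j, X_{-j}^\ell)$ from \eqref{eq:def-Delta}: since all $\x_k$ are unit-norm, $\x_k \notin \Delta(\x_j, X_{-j}^\ell)$ is equivalent to
\begin{equation}
\mu(\x_k, \bfdelta_j) \le \frac{\lambda}{\|\bfdelta_j\|_2}.
\end{equation}
Thus a sufficient condition for $\c^*(\x_j, X_{-j})$ to be subspace preserving is
\begin{equation}
\max_{k:\,\x_k\notin\S_\ell} \mu(\x_k, \bfdelta_j) \le \frac{\lambda}{\|\bfdelta_j\|_2}.
\end{equation}

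The last step is to weaken this condition into the more usable inradius form in the statement. Theorem~\ref{thm:bound-region} provides exactly the needed lower bound:
\begin{equation}
\frac{\lambda}{\|\bfdelta_j\|_2} \ge \frac{r_j^2}{r_j + \frac{1-\lambda}{\lambda}}.
\end{equation}
Hence if the hypothesis \eqref{eq:subspace-preserving-condition-inradius} holds, then $\max_{k:\,\x_k\notin\S_\ell} \mu(\x_k, \bfdelta_j) \le \lambda/\|\bfdelta_j\|_2$, and by the previous step the solution is subspace preserving.

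There is no real obstacle here; the only thing to double-check is that the strict-versus-non-strict inequalities match up (the oracle region is defined with a strict inequality, so being \emph{outside} it only requires a non-strict inequality, which is what appears in both our intermediate step and the hypothesis). The edge case $\lambda = 0$ is handled by the convention stated just after Theorem~\ref{thm:bound-region} that the right-hand side of \eqref{eq:bound-region} is zero; in that case the condition \eqref{eq:subspace-preserving-condition-inradius} becomes vacuous (it forces all external coherences to vanish), consistent with the fact that pure $\ell_2$ regularization is generally not subspace preserving. So the entire proof is essentially one chain of implications, with Theorem~\ref{thm:bound-region} doing all of the geometric heavy lifting.
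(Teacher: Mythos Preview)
Your proof is correct and follows essentially the same route as the paper: combine Lemma~\ref{thm:subspace-preserving-lemma} with the definition of the oracle region to reduce the problem to the inequality $\max_{k:\x_k\notin\S_\ell}\mu(\x_k,\bfdelta_j)\le \lambda/\|\bfdelta_j\|_2$, then invoke the lower bound from Theorem~\ref{thm:bound-region}. The paper formally derives this theorem as a corollary of the $\kappa_j$ version (Theorem~\ref{thm:subspace-preserving-condition}) together with $r_j\le\kappa_j$, but since Theorem~\ref{thm:bound-region} is already stated in terms of $r_j$, your direct chain is equivalent and arguably slightly cleaner.
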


Notice that in Theorem~\ref{thm:subspace-preserving-condition-inradius} the quantity $\bfdelta_j$ is determined from $X^\ell_{-j}$ and that it lies within the subspace $\S _\ell$ by definition of $\bfdelta(\x_j,X_{-j}^\ell)$. Thus the left-hand-side of \eqref{eq:subspace-preserving-condition-inradius} characterizes the separation between the oracle point---which is in $\S _\ell$---and the set of points outside of $\S _\ell$. On the right-hand-side, $r_j$ characterizes the distribution of points in $X_{-j} ^\ell$. In particular, $r_j$ is large when points are well spread within $\S _\ell$ and not skewed toward any direction. Finally, note that the right-hand-side of \eqref{eq:subspace-preserving-condition-inradius} is an increasing function of $\lambda$, showing that the solution is more likely to be subspace preserving if more weight is placed on the $\ell_1$ regularizer relative to the $\ell_2$ regularizer.

Theorem~\ref{thm:subspace-preserving-condition-inradius} has a close relationship to the sufficient condition for SSC to give a subspace preserving solution (the case $\lambda = 1$) \cite{Soltanolkotabi:AS13}. Specifically, \cite{Soltanolkotabi:AS13} shows that if $\max_{k: \x_k \notin \S_\ell}\mu(\x_k, \bfdelta_j) < r_j$, then SSC gives a subspace preserving solution. We can observe that condition \eqref{eq:subspace-preserving-condition-inradius} approaches the condition for SSC as $\lambda \rightarrow 1$.

The result stated in Theorem \ref{thm:subspace-preserving-condition-inradius} is a special case of the following more general result.

\begin{theorem}	\label{thm:subspace-preserving-condition}
	Let $\x_j \in \S_\ell$, $\bfdelta_j = \bfdelta(\x_j, X_{-j} ^\ell)$ be the oracle point, and $\kappa_j = \max_{k \ne j, \x_k \in \S_\ell} \mu(\x_k, \bfdelta_j)$ be the coherence of $\bfdelta_j$ with its nearest neighbor in $X_{-j} ^\ell$. Then, the solution $\c^*(\x_j, X_{-j})$ is subspace preserving if
	\begin{equation}
	\max_{k: \x_k \notin \S_\ell}\mu(\x_k, \bfdelta_j) \le \frac{\kappa_j^2}{\kappa_j + \frac{1-\lambda}{\lambda}}.
	\label{eq:subspace-preserving-condition}
	\end{equation}
\end{theorem}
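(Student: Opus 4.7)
The plan is to convert the subspace-preserving property into an explicit upper bound on $\|\bfdelta_j\|_2$. By Lemma~\ref{thm:subspace-preserving-lemma}, $\c^*(\x_j, X_{-j})$ is subspace preserving if and only if $\mu(\x_k, \bfdelta_j) \leq \lambda/\|\bfdelta_j\|_2$ for every $\x_k \notin \S_\ell$. Combined with the hypothesis \eqref{eq:subspace-preserving-condition}, it therefore suffices to prove
$$\|\bfdelta_j\|_2 \;\leq\; \frac{\lambda}{\kappa_j} + \frac{1-\lambda}{\kappa_j^2},$$
which is precisely equivalent to $\lambda/\|\bfdelta_j\|_2 \geq \kappa_j^2/(\kappa_j + (1-\lambda)/\lambda)$. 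I would establish this bound using the KKT stationarity conditions for the restricted subproblem $\c^* := \c^*(\x_j, X_{-j}^\ell)$, namely $|\x_k^\top \bfdelta_j| \leq \lambda + (1-\lambda)|c^*_k|$ with equality on the support of $\c^*$, together with the defining relation $\bfdelta_j = \gamma(\x_j - X_{-j}^\ell \c^*)$.

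The degenerate case $\c^* = 0$ is immediate: then $\bfdelta_j = \gamma \x_j$ and the KKT condition at $\c = 0$ gives $\gamma \mu(\x_k, \x_j) \leq \lambda$ for every admissible $k$, so $\gamma \kappa_j \leq \lambda$ and $\|\bfdelta_j\|_2 = \gamma \leq \lambda/\kappa_j$. In the main case $\c^* \neq 0$, the KKT bound together with the fact that off-support entries satisfy $|\x_k^\top \bfdelta_j| \leq \lambda$ implies that $\max_k |\x_k^\top \bfdelta_j|$ is attained at the index $k^*$ realizing $\|\c^*\|_\infty$. Hence
$$\kappa_j\,\|\bfdelta_j\|_2 \;=\; \max_k |\x_k^\top \bfdelta_j| \;=\; \lambda + (1-\lambda)\|\c^*\|_\infty,$$
and substituting this into the target bound reduces the whole task to proving $\|\c^*\|_\infty \leq 1/\kappa_j$.

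For this last step, I would take the inner product of the KKT identity $\x_j = \bfdelta_j/\gamma + X_{-j}^\ell \c^*$ with $\bfdelta_j$ to obtain
$$\bfdelta_j^\top \x_j \;=\; \frac{\|\bfdelta_j\|_2^2}{\gamma} \,+\, \sum_{k} c^*_k (\x_k^\top \bfdelta_j).$$
On the support of $\c^*$ each summand equals $|c^*_k|\bigl(\lambda + (1-\lambda)|c^*_k|\bigr) \geq 0$, so the single $k=k^*$ term, which equals $\|\c^*\|_\infty \cdot \kappa_j \|\bfdelta_j\|_2$, is bounded above by the full sum. The full sum in turn equals $\bfdelta_j^\top \x_j - \|\bfdelta_j\|_2^2/\gamma \leq \bfdelta_j^\top \x_j \leq \|\bfdelta_j\|_2$ by Cauchy--Schwarz with $\|\x_j\|_2 = 1$. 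Rearranging gives $\|\c^*\|_\infty \leq 1/\kappa_j$, closing the loop.

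The main conceptual obstacle is spotting the correct target inequality $\|\bfdelta_j\|_2 \leq \lambda/\kappa_j + (1-\lambda)/\kappa_j^2$ and the trick of isolating the $k^*$ term inside a nonnegative sum to extract an upper bound on $\|\c^*\|_\infty$; after that, everything is routine KKT bookkeeping. As a sanity check, this argument subsumes Theorem~\ref{thm:subspace-preserving-condition-inradius}, since $\kappa_j \geq r_j$ by the definition of the inradius and the map $x \mapsto x^2/(x + (1-\lambda)/\lambda)$ is monotone increasing on $(0,\infty)$.
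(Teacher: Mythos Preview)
Your proposal is correct and follows essentially the same route as the paper. The paper reduces Theorem~\ref{thm:subspace-preserving-condition} to Lemma~\ref{thm:subspace-preserving-lemma} plus the oracle-point bound $\|\bfdelta_j\|_2 \le (\lambda\kappa_j + 1-\lambda)/\kappa_j^2$ (stated there as Lemma~\ref{thm:bound-delta}), proved using exactly the same ingredients you use: the soft-threshold KKT relation, the inner-product identity $\langle A\c^*,\bfdelta\rangle = \langle \b,\bfdelta\rangle - \|\bfdelta\|_2^2/\gamma$, and isolating the single index of maximal coherence inside the nonnegative sum $(1-\lambda)\|\c^*\|_2^2 + \lambda\|\c^*\|_1$. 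The only difference is organizational: the paper keeps the single-index lower bound in the form $(\kappa\|\bfdelta\|-\lambda)\kappa\|\bfdelta\|/(1-\lambda)$ and solves the resulting quadratic inequality for $\|\bfdelta\|$, whereas you first rewrite $\kappa_j\|\bfdelta_j\|_2 = \lambda + (1-\lambda)\|\c^*\|_\infty$ and reduce to $\|\c^*\|_\infty \le 1/\kappa_j$, which makes the algebra slightly cleaner but is the same argument.
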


The only difference between this result and that in Theorem \ref{thm:subspace-preserving-condition-inradius} is that $\kappa_j$ is used instead of $r_j$ for characterizing the distribution of points in $X_{-j}^\ell$. We show in Lemma \ref{thm:inradius} that $r_j \le \kappa_j$, which makes Theorem \ref{thm:subspace-preserving-condition} more general than Theorem \ref{thm:subspace-preserving-condition-inradius}. Geometrically, $r_j$ is large if the subspace $\S _\ell$ is well-covered by $X _j ^\ell$, while $\kappa_j$ is large if the neighborhood of the oracle closest to $\bfdelta_j$ is well-covered, \ie, there is a point in $X _{-j}^\ell$ that is close to $\bfdelta_j$. \modify{Thus, while the condition in Theorem \ref{thm:subspace-preserving-condition-inradius} requires each subspace to have global coverage by the data, the condition in Theorem \ref{thm:subspace-preserving-condition} allows the data to be biased, and only requires a local region to be well-covered. In addition, condition \eqref{eq:subspace-preserving-condition} can be checked when the membership of the data points is known. This advantage allows us to check the tightness of the condition \eqref{eq:subspace-preserving-condition}, which is studied in more details in the appendix. In contrast, condition \eqref{eq:subspace-preserving-condition-inradius} and previous work on SSC \cite{Soltanolkotabi:AS13,Wang-Xu:ICML13} use the inradius $r_j$, which is generally NP-hard to calculate \cite{Soltanolkotabi:AS13,Wang:NIPS13-LRR+SSC}.}


\section{Experiments}

\subsection{ORGEN on synthetic data}

We conducted synthetic experiments to illustrate the computational
efficiency of the proposed algorithm ORGEN. Three popular
solvers are exploited: the regularized feature sign search (RFSS) is
an active set type method \cite{Jin:IP09};  the LASSO version of the
LARS algorithm \cite{Efron:AS04} that is implemented in the sparse modeling
software (SPAMS); and the gradient projection for sparse reconstruction
(GPSR) algorithm proposed in \cite{Figueiredo:STSP07}. These three
solvers are used to solve the subproblem in step
\ref{step:solve-subproblem} of ORGEN, resulting in three
implementations of ORGEN. We also used the three solvers as stand-alone
solvers for comparison purposes.

In all experiments, the vector $\b$ and columns of $A$ are all generated independently and uniformly at random on the unit sphere of $\RR^{100}$. The results are averages over $50$ trials.


In the first experiment, we test the scaling behavior of ORGEN by
varying $N$; the results are shown in Figure \ref{fig:alg_N_time}. We
can see that our active-set scheme improves the computational efficiency of
all three solvers. Moreover, as $N$ grows, the improvement becomes
more significant.

\begin{figure*}[t]
	\centering
	\subfigure[\label{fig:alg_N_time} Running time versus $N$]{\includegraphics[scale = 0.37]{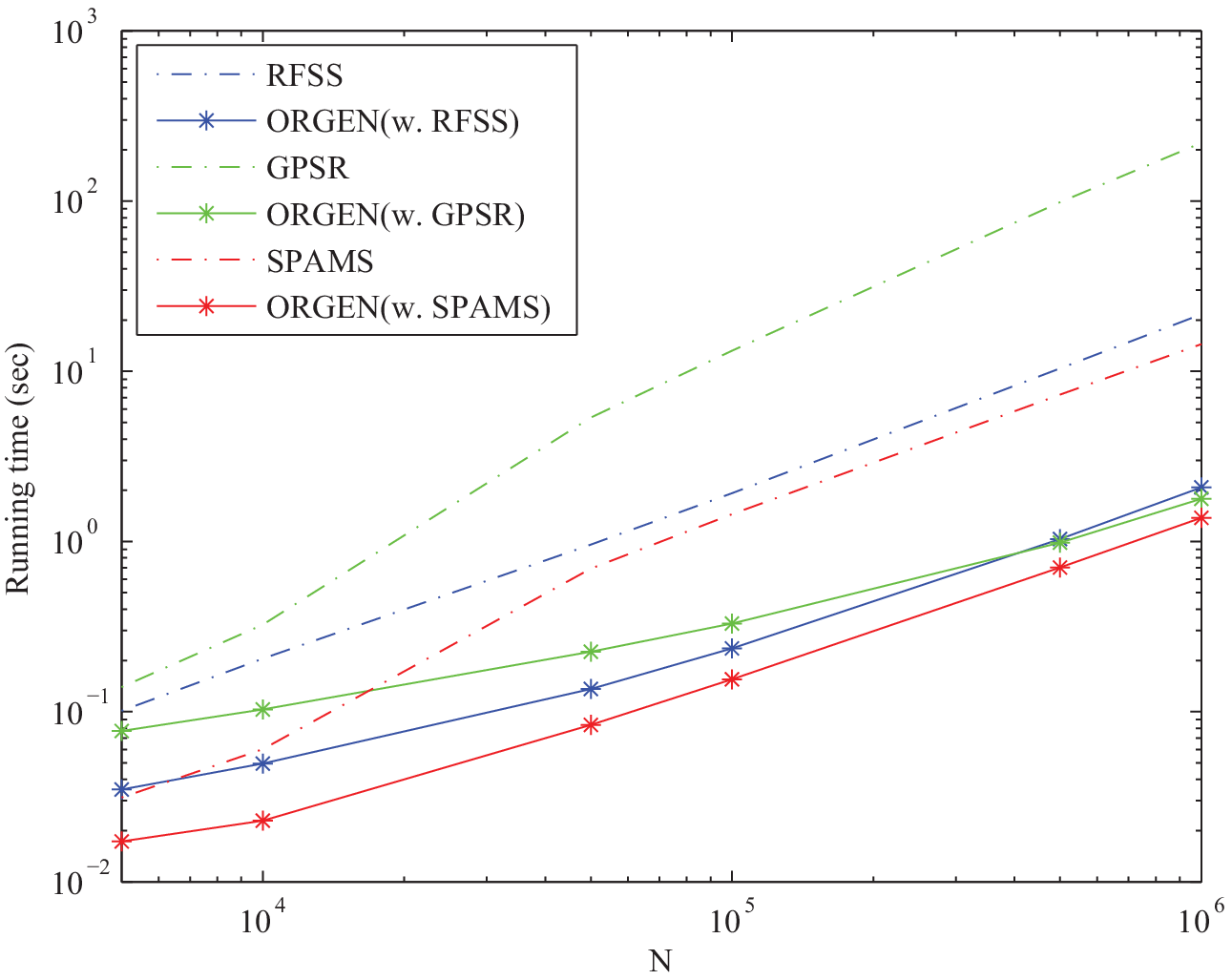}}
	~~~~~~
	\subfigure[\label{fig:alg_lambda_time} Running time versus $\lambda$]{\includegraphics[scale = 0.37]{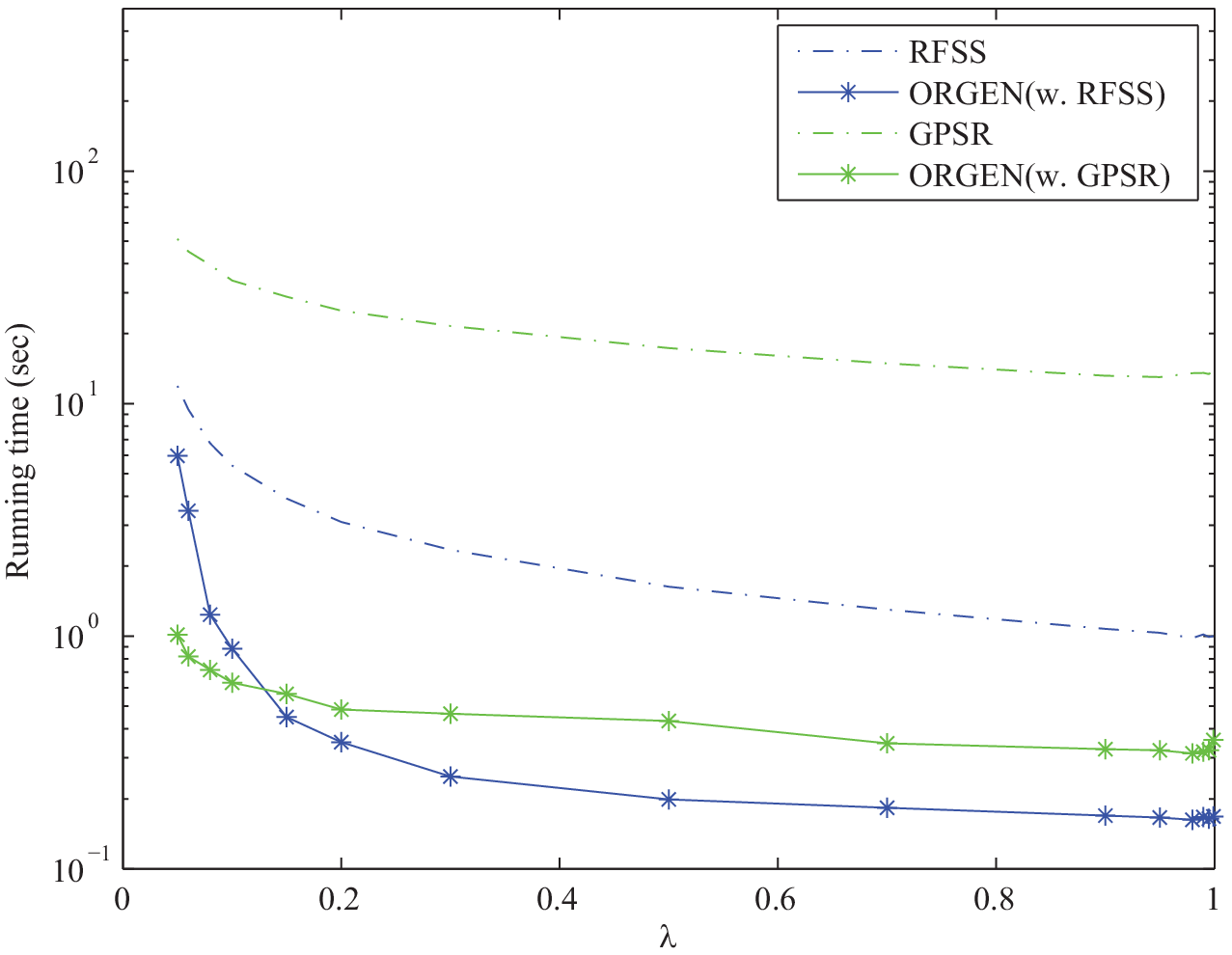}}
	~~~~~~
	\subfigure[\label{fig:alg_lambda_nnz} Sparsity versus $\lambda$]{\includegraphics[scale = 0.37]{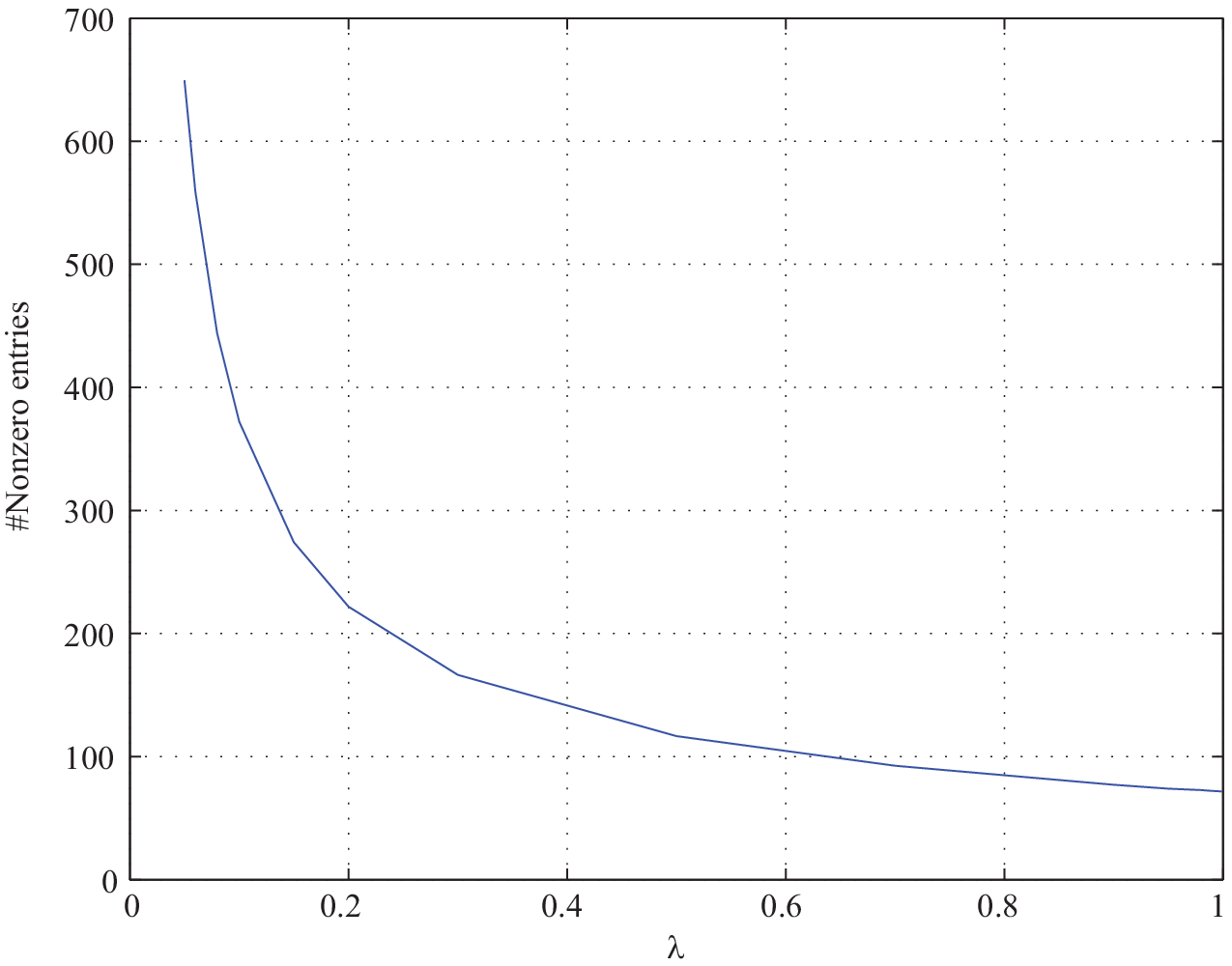}}
	\label{fig:alg_lambda}
	\caption{Performance with varying $N$ and
          $\lambda$: (a) $\lambda = 0.9$ and $N \in [5000,
            10^6]$; and (b, c) $N = 100,000$ and $\lambda\in [0.05, 0.999]$.}
\end{figure*}

Next, we test the performance of ORGEN for various values of the
parameter $\lambda$ that controls the tradeoff between the subspace preserving and connectedness properties; the running times and sparsity level are shown in Figures
\ref{fig:alg_lambda_time} and \ref{fig:alg_lambda_nnz}, respectively.
The performance of SPAMS is not reported since it
performs poorly even for moderately small values of $\lambda$.  For all
methods, the computational efficiency decreases as $\lambda$ becomes
smaller. For the two versions of ORGEN, this is expected since the
solution becomes denser as $\lambda$ becomes smaller (see Figure
\ref{fig:alg_lambda_nnz}).  Thus the active sets become
larger, which leads directly to larger and more time consuming
subproblems in step \ref{step:solve-subproblem}.

\begin{table}
	\centering
	\caption{Dataset information.}
	\label{tbl:dataset}
	\begin{tabular}{@{\;}c@{\;}|@{\;}c@{\;\;}c@{\;\;}c@{\;}}
		\hline
		& $N$ ($\#$data) & $D$ (ambient dim.) & $n$ ($\#$groups) \\ \hline
		Coil-100 &      7,200       &         1024          &        100         \\ \hline
		PIE    &      11,554      &         1024          &         68         \\ \hline
		MNIST   &      70,000      &          500          &         10         \\ \hline
		CovType  &     581,012      &          54           &         7          \\ \hline
	\end{tabular}
	\vspace{-2.5mm}
\end{table}


\subsection{EnSC on real data}


In this section, we use ORGEN to solve the optimization
problems arising in EnSC, where each subproblem in step
\ref{step:solve-subproblem} is solved using the RFSS method.
To compute the coefficient vectors $\c^*(\x_j, X_{-j})$, the parameter $\lambda$ is set to be the same for all $j$, while the parameter
$\gamma$ is set as $\gamma = \alpha \gamma_0$ where $\alpha > 1$ is a hyperparameter and $\gamma_0$ is the smallest value of $\gamma$ such that $\c^*(\x_j, X_{-j})$ is nonzero.
The algorithm is run for at most
$2$ iterations, as we observe that this is sufficient for the purpose
of subspace clustering and that subsequent iterations do not boost
performance. We measure clustering performance by clustering
accuracy, which is calculated as the best matching rate between the
label predicted by the algorithm and that of the ground truth.

\myparagraph{Datasets}
We test our method on the four datasets presented in Table \ref{tbl:dataset}.
The Coil-100 dataset \cite{Nene:1996-coil} contains $7,\!200$ gray-scale images of $100$
different objects. Each object has $72$ images taken at pose intervals
of $5$ degrees, with the images being of size $32 \times 32$. The PIE dataset \cite{Gross:IVC-10}
contains images of the faces of $68$ people taken under $13$ different poses, 43 different
illuminations, and 4 different expressions. In the experiments, we use
the five near frontal
poses and all images under different illuminations and
expressions. Each image is manually cropped and normalized to $32
\times 32$ pixels. The MNIST dataset \cite{LeCun:1998} contains $70,\!000$
images of handwritten digits $0$--$9$. For each image, we extract a
feature vector of dimension $3,\!472$ via the scattering convolution
network \cite{Bruna:PAMI13}, and then project to dimension $500$ using
PCA. Finally, the Covtype database\footnote{http://archive.ics.uci.edu/ml/datasets/Covertype} has been collected to predict forest cover type from $54$
cartographic variables.

\begin{table*}[t]
	\centering
	\caption{Performance of different clustering algorithms. The
          running time includes the time for computing the affinity matrix
          and for performing spectral clustering. The sparsity is the  number of nonzero coefficients in each representation $\c_j$ averaged over $j=1, \cdots, N$. The value ``M'' means
          that the memory limit of 16GB was exceeded, and the
          value ``T'' means that the time limit of
          seven days was reached. }
	\label{tbl:subspace-real}
	\begin{tabular}{c||cccc|cc|cc|c}
		         &  TSC  &  OMP  &  NSN  & SSC-SPAMS & SSC-ADMM & LRSC  & ENSC  &  KMP  &  EnSC-ORGEN   \\ \hline
		\multicolumn{10}{l}{\textsl{Clustering accuracy (\%)}}                                           \\ \hline
		Coil-100 & 61.32 & 33.64 & 50.32 &   53.75   &  57.10   & 55.76 & 51.11 & 61.97 & \textbf{69.24} \\ \hline
		  PIE    & 22.15 & 11.28 & 35.02 &   39.05   &  41.94   & 46.65 & 21.40 & 16.55 & \textbf{52.98} \\ \hline
		 MNIST   & 85.00 & 46.84 & 85.82 &   92.46   &    M     &   M   &   M   &   M   & \textbf{93.79} \\ \hline
		CovType  & 35.45 & 48.76 & 38.04 &     T     &    M     &   M   &   M   &   M   & \textbf{53.52} \\ \hline
		\multicolumn{10}{l}{\textsl{Running time (min.)}}                                                \\ \hline
		Coil-100 &   2   &   2   &  11   &    16     &   127    &   3   &   8   &  63   &       3        \\ \hline
		  PIE    &   3   &   8   &  25   &    67     &   412    &  12   &  25   &  361  &       13       \\ \hline
		 MNIST   &  30   &  24   &  298  &   1350    &    -     &   -   &   -   &   -   &       28       \\ \hline
		CovType  &  999  &  783  & 3572  &     -     &    -     &   -   &   -   &   -   &      1452      \\ \hline
		\multicolumn{10}{l}{\textsl{Sparsity}}                                                           \\ \hline
		Coil-100 &   4   &   2   &  18   &    7.0    &   5.4    & 7199  & 7199  &  20.9   &      6.3       \\ \hline
		  PIE    &   8   &  22   &  17   &   20.4    &   28.5   & 11553 & 11553 &  30.0   &      82.4      \\ \hline
		 MNIST   &   8   &  10   &  12   &   25.4    &    -     &   -   &   -   &   -   &      26.6      \\ \hline
		CovType  &  20   &  15   &  10   &     -     &    -     &   -   &   -   &   -   &      34.9      \\ \hline
	\end{tabular}
	\vspace{-0.8mm}
\end{table*}

\myparagraph{Methods}
We compare our method with several state-of-the-art subspace
clustering methods that may be categorized into three groups. The first
group contains
TSC~\cite{Heckel:arxiv13}, OMP~\cite{Dyer:JMLR13},
NSN~\cite{Park:NIPS14}, and SSC~\cite{Elhamifar:CVPR09}. TSC is a variant of the $k$-nearest neighbors method,
OMP and
NSN are two sparse greedy methods, and SSC is a convex optimization
method. These algorithms build sparse affinity
matrices and are computationally efficient, and therefore can perform large-scale
clustering. For TSC and NSN we use the code provided by
the respective authors. We note that the code may not be optimized for
computational efficiency considerations. For OMP, we use our
implementation, which has been optimized for subspace clustering.
For SSC we use the SPAMS solver described in the
previous section.

The second group consists of LRSC and SSC (with a different solver). We use the code provided by their respective authors, which uses the Alternating Direction Method of Multipliers (ADMM) to solve the optimization
problems. To distinguish the two versions of SSC, we refer to this one as SSC-ADMM and to the previous one as SSC-SPAMS.

The final group consists of ENSC~\cite{Panagakis:PRL14}
and KMP~\cite{Lai:ECCV14}, and are the closest in spirit to our method.
Our method and ENSC both balance the $\ell_1$ and
$\ell_2$ regularizations, but ENSC uses $h(\e) = \|\e\|_1$ to penalize
the noise (see \eqref{eq:self-expression}) and the linearized
alternating direction method to minimize their objective. In KMP,
the k-support norm is used to blend the $\ell_1$ and $\ell_2$
regularizers. We implemented ENSC and KMP according to
the descriptions in their original papers.

\myparagraph{Results}
To the best of our knowledge, a comparison of all these methods on large scale datasets has not been reported in prior work.
Thus, we run all experiments and tune the parameters for each method to give the best
clustering accuracy. The results are reported in Table
\ref{tbl:subspace-real}.

We see that our proposed method achieves the best clustering performance on every dataset. Our method is also among the most efficient in terms of computing time. The methods SSC-ADMM, ENSC, LRSC and KMP cannot handle large-scale data because they
perform calculations over the full data matrix and put the entire kernel matrix $X^\transpose X$ in memory, which is infeasible for large datasets. The method of SSC-SPAMS uses an active set method that can deal with massive data, however, it is computationally much less efficient than our solver ORGEN.


For understanding the advantages of our method, in Table \ref{tbl:subspace-real} we report the sparsity of the representation coefficients, which is the number of nonzero entries in $\c_j$ averaged over all $j=1, \ldots, N$. For TSC, OMP and NSN, the sparsity is directly provided as a parameter of the algorithms. For SSC and our method EnSC-ORGEN, the sparsity is indirectly controlled by the parameters of the models. We can see that our method usually gives more nonzero entries than the sparsity based methods of TSC, OMP, NSN, and SSC. This shows the benefit of our method: while the number of correct connections built by OMP, NSN and SSC are in general upper-bounded by the dimension of the subspace, our method does not have this limit and is capable of constructing more correct connections and producing well-connected affinity graphs. On the other hand, the affinity graph of LRSC is dense, so although each cluster is self-connected, there are abundant wrong connections. This highlights the advantage of our method, which is flexible in controlling the number of nonzero entries by adjusting the trade-off parameter $\lambda$. Our results illustrate that this trade-off improves clustering accuracy.

Finally, ENSC and KMP are two representatives of other methods that also exploit the trade-off between $\ell_1$ and $\ell_2$ regularizations. A drawback of both works is that the solvers for their optimization problems are not as effective as our ORGEN algorithm, as they cannot deal with large datasets due to memory requirements. Moreover, we observe that their algorithms converge to modest accuracy in a few iterations but can be very slow in giving a high precision solution. This may explain why their clustering accuracy is not as good as that of EnSC-ORGEN. Especially, we see that ENSC gives dense solutions although the true solution is expected to be sparser, and this is explained by the fact that the solution paths of their solver are dense solutions.

\section{Conclusion}
We investigated elastic net regularization (\ie, a mixture of the $\ell_1$ and $\ell_2$ norms) for scalable and provable subspace clustering. Specifically, we presented an active set algorithm that efficiently solves the elastic net regularization subproblem by capitalizing on the geometric structure of the elastic net solution.  We then gave theoretical justifications---based on a geometric interpretation for the trade-off between the subspace preserving and connectedness properties---for the correctness of subspace clustering via the elastic net. Extensive experiments verified that that our proposed active set method achieves state-of-the art clustering accuracy and can handle large-scale datasets.

\myparagraph{Acknowledgments}
C. You, D. P. Robinson and R. Vidal are supported by the National Science Foundation under grant 1447822. C.-G. Li is partially supported by National Natural Science Foundation of China under grants 61273217 and 61511130081, and the 111 project under grant B08004. The authors thank Ben Haeffele for insightful comments on the design of the ORGEN algorithm.$\!$


\numberwithin{equation}{section}
\numberwithin{figure}{section}
\numberwithin{table}{section}

\begin{appendices}
	\modify{
		The appendix is organized as follows. In Section \ref{sec:prf-elastic-net} we present proofs for the geometric properties of the elastic net solution. In Section \ref{sec:prf-alg} we show the convergence of algorithm ORGEN. In Section \ref{sec:prf-EnSC} we prove the relevant results for the properties of the EnSC. In Section \ref{sec:add-exp}, we use synthetically generated data to verify our results on the properties of the EnSC. In Section \ref{sec:lambda_eq_1}, we study the special case of $\lambda = 1$, in which the EnSC method reduces to SSC. We show that the properties of EnSC as well as the ORGEN algorithm also apply to SSC with minor modifications, thus this work also offers additional understanding of SSC. In Section \ref{sec:param} we report the parameters of the algorithms used for real data experiments. Finally, in Section \ref{sec:related-EnSC}, we clarify the contribution of this paper in comparison to several prior works on elastic net based subspace clustering.
	}
	\section{Proof of the Geometric Properties of the Elastic Net Solution}
	\label{sec:prf-elastic-net}
	
	A fundamental result that serves as the basis for the analysis of the elastic net solution in Section \ref{sec:elastic-net} is the next lemma. It is used to prove Theorem \ref{thm:geometry} and Propositions \ref{thm:Delta-dynamic-1} and \ref{thm:Delta-dynamic-2}.
	
	\begin{lemma}[\cite{Mol:JC09,Jin:IP09}] 		\label{thm:basic}
		The vector $\hat{\c} \in \RR^N$ is the unique solution to \eqref{eq:en} if and only if it satisfies
		\begin{equation}
			(1-\lambda) \hat{\c} = \T_\lambda\big(A^\transpose \cdot \gamma (\b - A \hat{\c})\big).
			\label{eq:basic}
		\end{equation}
	\end{lemma}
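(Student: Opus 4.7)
The plan is to derive the identity from first-order optimality conditions for the strongly convex objective $f(\c;\b,A)$ and then show that the resulting subgradient inclusion is equivalent componentwise to the soft-thresholding equation in the statement.

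First I would observe that since $\lambda\in[0,1)$, the function $f(\cdot;\b,A)$ is strongly convex (the $(1-\lambda)/2 \cdot \|\c\|_2^2$ term alone supplies strong convexity), so a minimizer exists and is unique. Therefore it suffices to characterize the unique minimizer by the standard convex optimality condition $\0 \in \partial_{\c} f(\hat{\c};\b,A)$. Computing the subdifferential termwise, this is equivalent to the existence of $\s \in \partial \|\hat\c\|_1$ (so $s_j = \text{sgn}(\hat c_j)$ when $\hat c_j\neq 0$ and $s_j\in[-1,1]$ otherwise) with
\begin{equation*}
(1-\lambda)\hat{\c} + \lambda \s \;=\; \gamma A^{\transpose}(\b - A\hat{\c}).
\end{equation*}
Writing $\v := \gamma A^{\transpose}(\b-A\hat\c)$, the claim reduces to showing: there exists such an $\s$ with $(1-\lambda)\hat\c + \lambda\s = \v$ if and only if $(1-\lambda)\hat\c = \T_\lambda(\v)$.

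To prove this equivalence I would argue componentwise on the index $j$. In the forward direction, split into three cases on the sign of $\hat c_j$. If $\hat c_j>0$ then $s_j=1$, hence $v_j=(1-\lambda)\hat c_j + \lambda > \lambda$, so $\T_\lambda(v_j) = v_j-\lambda = (1-\lambda)\hat c_j$. The case $\hat c_j<0$ is symmetric. If $\hat c_j=0$, then $v_j = \lambda s_j \in[-\lambda,\lambda]$, so $\T_\lambda(v_j)=0=(1-\lambda)\hat c_j$. Conversely, starting from $(1-\lambda)\hat\c = \T_\lambda(\v)$, I would construct a valid subgradient $\s$ by setting $s_j = \text{sgn}(v_j)$ whenever $|v_j|>\lambda$ (in which case $\hat c_j$ has the same sign as $v_j$ and the relation holds) and $s_j = v_j/\lambda \in[-1,1]$ whenever $|v_j|\le\lambda$ (in which case $\hat c_j=0$, so $s_j$ is admissible); in either case, $(1-\lambda)\hat c_j + \lambda s_j = v_j$.

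There is no real obstacle here beyond careful bookkeeping of the three sign cases in the soft-thresholding map; the content of the lemma is essentially the fixed-point reformulation of the proximal operator of $\lambda\|\cdot\|_1 + \tfrac{1-\lambda}{2}\|\cdot\|_2^2$, which has a closed form given by a rescaled soft threshold. Since this is a known result (the excerpt cites it to prior work), the proof is routine once the subdifferential calculus is set up, and uniqueness from strong convexity guarantees that the ``only if'' and ``if'' directions together pin down the unique optimizer.
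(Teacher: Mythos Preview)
Your proposal is correct and follows essentially the same route as the paper: derive the first-order optimality condition $(1-\lambda)\hat{\c}+\lambda\s = \gamma A^\transpose(\b-A\hat{\c})$ for some $\s\in\partial\|\hat{\c}\|_1$, then argue componentwise that this is equivalent to the soft-thresholding identity. The only cosmetic difference is that the paper dispatches the forward direction in one line by ``applying $\T_\lambda$ to both sides'' and reserves the three-case analysis for the reverse implication, whereas you write out the cases explicitly in both directions; the content is the same.
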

	
	\begin{proof}
		We provide a sketch of the proof for completeness. Since problem \eqref{eq:en} is strongly convex, $\hat{\c}$ is the unique optimal solution if and only if it satisfies the following optimality condition:
		\begin{equation}
			A^\transpose \cdot \gamma (\b - A \hat{\c}) = (1-\lambda) \hat{\c} + \lambda z.
			\label{eq:optimality}
		\end{equation}
		for some $z\in\partial \|\hat{\c}\|_1$.
		Then, by taking the soft-thresholding $\T_\lambda(\cdot)$ on both sides of \eqref{eq:optimality} we get \eqref{eq:basic}. For a proof of the reverse implication, suppose $\hat{\c}$ satisfies \eqref{eq:basic}. For each $j = 1, \cdots, N$, by considering the three cases $\hat{\c}_j > 0$, $\hat{\c}_j = 0$, and $\hat{\c}_j < 0$ separately, one can establish that the $j$-th row of \eqref{eq:optimality} is satisfied when the corresponding row of \eqref{eq:basic} holds.
	\end{proof}

	Theorem \ref{thm:geometry} follows trivially from this result. In the remainder of this section, we prove Propositions \ref{thm:Delta-dynamic-1} and \ref{thm:Delta-dynamic-2}.
	
	\subsection{Proof of Proposition~\ref{thm:Delta-dynamic-1}}	
	
	\begin{proof}
		Notice that $\c^*(\b, A)$ satisfies
		\begin{multline}
			(1-\lambda) \c^*(\b, A) = \T_\lambda\big(A^\transpose  \gamma (\b - A \c^*(\b, A))\big)\\
			= \T_\lambda\left(A^\transpose  \gamma \Big(\b - [A, A'] 
			\left[\begin{array}{c}
				\c^*(\b, A)\\
				\0_{N' \times 1}
			\end{array}\right]\Big)\right).
			\label{eq:prf-dyn1-1}
		\end{multline}
		Using the assumption that 
		no column of $A'$ is contained in $\Delta(\b, A)$, 
		it follows that
		\begin{equation*}
			\begin{split}
				(1-\lambda)\0_{N' \times 1} &= \T_\lambda\big(A'^\transpose  \bfdelta(\b, A)\big)\\
				&=\T_\lambda\big(A'^\transpose  \gamma (\b - A \c^*(\b, A))\big)\\
				&= \T_\lambda\left(A'^\transpose  \gamma \Big(\b - [A, A'] 
				\left[\begin{array}{c}
					\c^*(\b, A)\\
					\0_{N' \times 1}
				\end{array}\right]\Big)\right).
			\end{split}
		\end{equation*}
		We may then combine this equality with \eqref{eq:prf-dyn1-1} and define the vector $\hat{\c}:=[\c^*(\b, A)^\transpose, \0_{N' \times 1} ^\transpose] ^\transpose$ to obtain
		\begin{equation}
			(1-\lambda) \hat{\c} = \T_\lambda\Big([A, A']^\transpose  \gamma (\b - [A, A'] \hat{\c})\Big),
		\end{equation}
		thus by Lemma \ref{thm:basic}, $\hat{\c}$ must equal $\c^*(\b, [A, A'])$.
	\end{proof} 
	
	\subsection{Proof of Proposition~\ref{thm:Delta-dynamic-2}}		
	\begin{proof}
		We prove the contrapositive; let $\c_{A'} = \0$.  It then follows from $\c_A = \c^*(\b, A)$ that $\c^*(\b, [A, A']) = [\c^*(\b, A)^\transpose, \0^\transpose]$, and by definition of the oracle point that $\bfdelta(\b, A) = \bfdelta(\b, [A, A'])$. Now by Theorem \ref{thm:geometry}, we have
		\begin{equation}
			(1-\lambda)
			\left[\begin{array}{c}
				\c^*(\b, A)\\
				\0
			\end{array}\right]
			=\T_\lambda \left( 
			\left[\begin{array}{c}
				A^\transpose\\A'^\transpose
			\end{array}\right] \cdot \bfdelta(\b, A)\right).
		\end{equation}
		From the second block of equations and the definition of $\Delta(\b,A)$, we have that  no column of $A'$ lies in the oracle region $\Delta(\b, A)$, which completes the contrapositive proof.
	\end{proof}

	\section{Proof of Convergence for Algorithm~\ref{alg:main}}
	\label{sec:prf-alg}
	
	\subsection{Proof of Lemma~\ref{lem:decrease}}	
	
	\begin{proof}
		
		Let us define the sets
		\begin{align*}
			Q &:= T_k\setminus T_{k+1}, \\
			S &:= T_k\cap T_{k+1}, \ \ \text{and} \\ 
			R &:= T_{k+1} \setminus T_k \neq \emptyset,
		\end{align*}
		where the fact that $R$ is nonempty follows from the assumption $T_{k+1} \nsubseteq T_k$ in the statement of Lemma~\ref{lem:decrease}. By these definitions, $T_k = Q \cup S$, and $T_{k+1} = S \cup R$.
		
		By definition, $T_{k+1}$ contains all columns of $A$ that are in $\Delta(\b, A_{T_k})$, thus no column of $A_Q$ is in $\Delta(\b, A_{T_k})$. By Proposition \ref{thm:Delta-dynamic-1}, 
		\begin{equation}
			\c^*(\b, A_{T_k}) = \c^*(\b, [A_{S}, A_{Q}]) = 
			\left[\begin{array}{c}
				\c^*(\b, A_S)\\
				\0
			\end{array}\right],
			\label{eq:prf-decrease-step1}
		\end{equation}
		in which we have assumed without loss of generality that columns of $A_{T_k}$ are arranged in the order such that $A_{T_k} = [A_{S}, A_{Q}]$. Using \eqref{eq:prf-decrease-step1}, we have
		\begin{equation}
			\begin{split}
				&f(\c^*(\b, A_{T_k});~\b, A_{T_k})\\
				=&f\left( \left[\begin{array}{c}
					\c^*(\b, A_{S})\\
					\0
				\end{array}\right];~\b, [A_S, A_R] \right)\\
				\ge& \min_{\c} f(\c;~ \b, [A_S, A_R])\\
				=& f(\c^*(\b, [A_S, A_R]);~\b, [A_S, A_R])\\
				=& f(\c^*(\b, A_{T_{k+1}});~\b, A_{T_{k+1}}).
				\label{eq:prf-decrease-final}
			\end{split}
		\end{equation}
		It remains to show that the inequality in \eqref{eq:prf-decrease-final} is strict. We show this by arguing that $[\c^*(\b, A_S)^\transpose, \0^\transpose]^\transpose$ that appears on the second line of \eqref{eq:prf-decrease-final} is not an optimal solution to the optimization problem stated on the third line. Denote the solution to this optimization problem as
		\begin{equation}
			\c^*(\b, [A_S, A_R]) :=
			\left[\begin{array}{c}
				\c_S\\
				\c_R
			\end{array}\right],
		\end{equation}
		where $\c_S$ and $\c_R$ are of appropriate sizes. By \eqref{eq:prf-decrease-step1} and the definition of the oracle region, we have
		\begin{equation}
			\Delta(\b, A_S) = \Delta(\b, A_{T_k}).
		\end{equation}
		Combining this with the facts that the columns of $A_{T_{k+1}}$ are in $\Delta(\b, A_{T_k})$ and $R \subseteq T_{k+1}$, we know that the columns of $A_R$ are in $\Delta(\b, A_S)$. Consequently, by Proposition \ref{thm:Delta-dynamic-2}, we must have $\c_R \ne \0$. This shows that $[\c^*(\b, A_S)^\transpose, \0^\transpose]^\transpose$ is not an optimal solution to the problem on the third line of \eqref{eq:prf-decrease-final} and thus the inequality in \eqref{eq:prf-decrease-final} is strict.	
	\end{proof}

	\subsection{Proof of Theorem~\ref{thm:alg-convergence}}	
	\begin{proof}
		We first prove that Algorithm~\ref{alg:main} terminates in a finite number of iterations. We first 	observe that the objective is strictly decreasing during each iteration before termination occurs (see Lemma \ref{lem:decrease}). Since there are only finitely many different active sets, we must conclude that Algorithm~\ref{alg:main} terminates after a finite number of iterations with $T_{k+1} \subset T_k$.
		
		%
		
		We now prove that when Algorithm~\ref{alg:main} terminates, the output vector is optimal.  	Construct the vector $\hat{\c}$ such that $\hat{\c}_{T_k} = \c^*(\b, A_{T_k})$ and $\hat{\c}_{T_k^c} = 0$, in which $T_k^c$ is the complement of $T_k$ in $\{1, \cdots, N\}$. By Theorem \ref{thm:geometry}, for any $j \in T_{k}$ it holds that $(1-\lambda) \cdot c^*_j (\b, A_{T_k}) = \T_\lambda( \a_j ^\transpose \cdot \bfdelta(\b, A_{T_k}) )$. For any $j \notin T_{k}$, by the termination condition $T_{k+1} \subseteq T_k$ we know $j \notin T_{k+1}$. Thus, by step \ref{step:update-support}, $0 = \T_\lambda( \a_j ^\transpose \cdot \bfdelta(\b, A_{T_k}) )$. Consequently, $\hat{\c}$ satisfies the relation in \eqref{eq:basic} and thus is the solution, i.e., $\hat{\c} = \c^*(\b, A)$. Also, from the construction it can be seen that the support of $\hat{\c}$ is precisely $T_{k+1}$.
	\end{proof}

	\section{Proof of the Correctness of EnSC}
	\label{sec:prf-EnSC}
	In this section we prove the results in Section \ref{sec:EnSC}. 
	
	\subsection{Inradius}
	The inradius introduced in Definition \ref{def:inradius} characterizes the distribution of a set of points. The next lemma can be interpreted as giving an equivalent definition of inradius for certain convex sets. The result is used in interpreting differences between Theorem \ref{thm:subspace-preserving-condition-inradius} and Theorem \ref{thm:subspace-preserving-condition}, as well as in proving Theorem \ref{thm:bound-region}.
	
	\begin{lemma}\label{thm:inradius}
		If $\{\a _j\}_{j=1}^N$ are points with unit $\ell_2$ norm, then
		\begin{equation}
			r\big(\conv\{\pm\a _j\}_{j=1}^N\big) = \min_{\v \ne 0} \max_{j=1,\cdots,N}\mu(\a _j, \v).
			\label{eq:inradius-covering}
		\end{equation}
	\end{lemma}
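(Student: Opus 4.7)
Let $P := \conv\{\pm\a_j\}_{j=1}^N$. The plan is to exploit the fact that $P$ is centrally symmetric ($P = -P$), so that the inradius is realized by a ball centered at the origin, and then to identify the inradius with (the reciprocal of) a support-function quantity.

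First I would verify that the largest inscribed ball can be taken to be centered at the origin. Suppose the closed ball $B(\c, r)$ of radius $r$ centered at $\c$ is contained in $P$. Then $-B(\c,r) = B(-\c,r) \subseteq -P = P$. Fix any $\y$ with $\|\y\|_2 \le r$. Then $\c+\y \in P$ and $-\c+\y \in P$, so by convexity
\begin{equation*}
\y = \tfrac{1}{2}\bigl[(\c+\y)+(-\c+\y)\bigr] \in P,
\end{equation*}
which shows $B(0,r) \subseteq P$. Thus $r(P) = \sup\{r \ge 0 : r B \subseteq P\}$, where $B$ is the closed unit Euclidean ball.

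Next I would compute the support function of $P$. Since the support function is linear under convex hulls and since $\max_j \langle \pm\a_j,\u\rangle = \max_j |\langle \a_j,\u\rangle|$, we have
\begin{equation*}
h_P(\u) := \max_{\x \in P} \langle \x,\u\rangle = \max_{j=1,\dots,N} |\langle \a_j,\u\rangle|.
\end{equation*}
A standard fact from convex analysis is that a closed convex set $K$ containing the origin is contained in $P$ if and only if $h_K(\u) \le h_P(\u)$ for all $\u$; applied to $K = rB$, whose support function is $r\|\u\|_2$, this yields
\begin{equation*}
rB \subseteq P \quad\Longleftrightarrow\quad r \|\u\|_2 \le \max_j |\langle \a_j,\u\rangle| \ \ \text{for all } \u \ne 0.
\end{equation*}

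Taking the supremum over admissible $r$ and using $\|\a_j\|_2 = 1$, we obtain
\begin{equation*}
r(P) = \inf_{\u \ne 0} \max_{j=1,\dots,N} \frac{|\langle \a_j,\u\rangle|}{\|\u\|_2} = \min_{\v \ne 0} \max_{j=1,\dots,N} \mu(\a_j,\v),
\end{equation*}
where the infimum is attained by continuity of the objective on the compact unit sphere (by homogeneity one may restrict to $\|\v\|_2=1$). The only mildly subtle step is the symmetry argument used to justify centering at the origin; the rest is a direct application of the support function characterization of convex-set inclusion.
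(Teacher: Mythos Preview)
Your proof is correct. The paper's proof takes a slightly different but equivalent route: it rewrites the right-hand side as $1/\max_{\v\ne 0}\|\v\|_2/\|A^\transpose\v\|_\infty$ and then simply quotes the standard polar-duality fact that the inradius of a symmetric convex body equals the reciprocal of the circumradius of its polar set. Your argument is essentially an unpacking of that fact via support functions: the inclusion $rB\subseteq P$ is characterized by $h_{rB}\le h_P$, which is exactly the polar condition in disguise. What you gain is a self-contained derivation (including the explicit symmetry step showing the inscribed ball may be centered at the origin), at the cost of a few more lines; what the paper gains is brevity by invoking a known result. Both are valid, and neither requires any idea the other lacks.
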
	
	\begin{proof}
		Let $A = [\a_1, \cdots, \a_N]$.  The right hand side of \eqref{eq:inradius-covering} can be written as
		\begin{multline}
			\min_{\v \ne 0} \max_{j=1,\cdots,N}\mu(\a _j, \v) \\
			= \min_{\v \neq 0} \frac{\|A^\transpose \v\|_\infty}{\|\v\|_2}
			= 1 / \max_{\v \neq 0} \frac{\|\v\|_2}{\|A^\transpose \v\|_\infty}.
			\label{eq:prf-inradius-covering}
		\end{multline}
		One then quotes the relation that the inradius of a symmetric convex body is the reciprocal of the circumradius of its polar set, which is exactly the right hand side of \eqref{eq:prf-inradius-covering} (see, e.g. Definition 7.2 in \cite{Soltanolkotabi:AS13} or Lemma 1 in \cite{You:arxiv15-SSR}).
	\end{proof}
	
	The interpretation of Lemma~\ref{thm:inradius} is as follows: one searches for a vector $\v$ that is furthest away from all points $\{\pm\a _j\}_{j=1}^N$, and the inradius is the coherence of this $\v$ with the closest neighbor in $\{\a _j\}_{j=1}^N$. In other words, it characterizes the covering property of the points $\{\pm\a _j\}_{j=1}^N$. If inradius is large then for any point in the space there exists an $\a _j$ that is close to it.
	
	\subsection{Proof of Lemma \ref{thm:subspace-preserving-lemma}}
	\begin{proof}
		Consider the problem
		\begin{equation}\label{eq:fictitious}
			\c^*(\x_j, X_{-j} ^\ell) = \arg\min_{\c} f(\c; \x_j, X_{-j} ^\ell)
		\end{equation}
		and by our notation, let $\Delta(\x_j, X_{-j} ^\ell)$ be its oracle region.
		
		For the ``if'' part, we know from Proposition \ref{thm:Delta-dynamic-1} that adding more points that are outside of the oracle region $\Delta(\x_j, X_{-j} ^\ell)$ to the dictionary of \eqref{eq:fictitious} does not affect its solution. To be more specific, if it holds that $\x_k \notin \Delta(\x_j, X_{-j} ^\ell)$ for all $\x_k \notin \S_\ell$, then by Proposition \ref{thm:Delta-dynamic-1} we have $\c^*(\b, X_{-j}) = P\cdot [\c^*(\x_j, X_{-j} ^\ell) ^\transpose, \0^\transpose] ^\transpose$, where $P$ is some permutation matrix.
		
		For the ``only if'' part, if any $\x_k \notin \S_\ell$ is in the oracle region $\Delta(\x_j, X_{-j} ^\ell)$, then Proposition \ref{thm:Delta-dynamic-2} shows that the coefficient vector of $\c^*(\b, X_{-j})$ that corresponds to points outside of $\S_\ell$  is nonzero.  Therefore, the solution is not correct in identifying the $l$-th subspace.
	\end{proof}
	
	\subsection{Proof of Theorem \ref{thm:bound-region}}
	
	Result Theorem \ref{thm:bound-region} follows from the bound on the norm of the oracle point given below in Lemma~\ref{thm:bound-delta} and the relation $\kappa\ge r$ as revealed by Lemma \ref{thm:inradius}.
	
	\begin{lemma}		\label{thm:bound-delta}
		Consider problem \eqref{eq:en}.  If we define $\kappa = \max_j \mu(\a_j, \bfdelta)$ as the coherence between the oracle point $\bfdelta$ and its closest neighbor among the columns of $A$, then
		\begin{equation}
			\|\bfdelta\|_2 \le \frac{\lambda \kappa + 1 - \lambda}{\kappa^2}.
			\label{eq:bound-delta}
		\end{equation}
	\end{lemma}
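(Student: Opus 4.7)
The plan is to bound the largest-magnitude entry of $\c^*$ first, and then convert that bound into the desired estimate on $\|\bfdelta\|_2$ via Theorem~\ref{thm:geometry}. Let $j^*$ be an index realizing the maximum in the definition of $\kappa$, i.e., $|\a_{j^*}^\transpose\bfdelta| = \kappa\|\bfdelta\|_2$ (note $\bfdelta\neq\0$ since $\|\b\|_2=1$). I would first treat the main case $\c^* \neq \0$, in which $j^*$ automatically lies in the support of $\c^*$: Theorem~\ref{thm:geometry} forces some coordinate $j$ to satisfy $|\a_j^\transpose\bfdelta|>\lambda$, and $j^*$ is a maximizer of $|\a_j^\transpose\bfdelta|$.

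The key step is to evaluate $\bfdelta^\transpose A \c^*$ in two complementary ways. On one hand, the definition $\bfdelta = \gamma(\b - A\c^*)$ gives $A\c^* = \b - \bfdelta/\gamma$, so
\[
\bfdelta^\transpose A\c^* \;=\; \bfdelta^\transpose\b - \frac{\|\bfdelta\|_2^2}{\gamma} \;\le\; \bfdelta^\transpose\b \;\le\; \|\bfdelta\|_2,
\]
using Cauchy--Schwarz together with $\|\b\|_2=1$. On the other hand, Theorem~\ref{thm:geometry} forces $\sign(c_j^*) = \sign(\a_j^\transpose\bfdelta)$ for every $j$ in the support of $\c^*$ (and $c_j^*=0$ otherwise), so
\[
\bfdelta^\transpose A\c^* \;=\; \sum_j c_j^*\, \a_j^\transpose\bfdelta \;=\; \sum_j |c_j^*|\,|\a_j^\transpose\bfdelta|
\]
is a sum of nonnegative terms. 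Retaining only the $j^*$ contribution yields $|c_{j^*}^*|\cdot\kappa\|\bfdelta\|_2 \le \|\bfdelta\|_2$, i.e., $|c_{j^*}^*| \le 1/\kappa$.

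To close the main case, I would read off the equality $|\a_{j^*}^\transpose\bfdelta| = (1-\lambda)|c_{j^*}^*| + \lambda$ from Theorem~\ref{thm:geometry} (valid because $j^*$ is in the support). Writing this as $\kappa\|\bfdelta\|_2 = (1-\lambda)|c_{j^*}^*|+\lambda$ and substituting $|c_{j^*}^*|\le 1/\kappa$ gives $\kappa\|\bfdelta\|_2 \le (1-\lambda)/\kappa + \lambda$, which after multiplication by $\kappa$ rearranges to \eqref{eq:bound-delta}. The degenerate case $\c^* = \0$ is handled directly: then $\bfdelta = \gamma\b$, and the optimality condition of Theorem~\ref{thm:geometry} forces $|\a_j^\transpose\bfdelta|\le\lambda$ for every $j$, so $\kappa\|\bfdelta\|_2\le\lambda$, which already implies $\|\bfdelta\|_2 \le \lambda/\kappa \le (\lambda\kappa+1-\lambda)/\kappa^2$ for $\lambda\in[0,1)$.

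I do not anticipate any serious obstacle. The only ``clever'' move is the two-way evaluation of $\bfdelta^\transpose A\c^*$: one expression yields a $\|\bfdelta\|_2$-style upper bound via Cauchy--Schwarz, while the other expresses the quantity as a sum of nonnegative terms indexed by the support, so that the single dominant term indexed by $j^*$ couples $|c_{j^*}^*|$ directly with the coherence $\kappa$.
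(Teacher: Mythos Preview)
Your proposal is correct and follows essentially the same route as the paper: both evaluate $\langle A\c^*,\bfdelta\rangle$ in two ways (via the definition of $\bfdelta$ and via the optimality relation of Theorem~\ref{thm:geometry}), then lower-bound the sum by the single term indexed by the maximally coherent column. The only cosmetic differences are that you discard the $-\|\bfdelta\|_2^2/\gamma$ term immediately (the paper retains it to get the slightly sharper intermediate bound $\|\bfdelta\|_2 \le (\lambda\kappa+1-\lambda)/(\kappa^2+(1-\lambda)/\gamma)$ before relaxing), and you split the algebra into ``first bound $|c_{j^*}^*|\le 1/\kappa$, then substitute'' whereas the paper writes one inequality in $\|\bfdelta\|_2$ and solves it directly.
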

	
	\begin{proof}
		If $\c^* = 0$, then the optimality condition \eqref{eq:optimality} shows that $\|A^\transpose \bfdelta\|_\infty \le \lambda$, hence $\kappa  \|\bfdelta\|_2 \le \lambda$.  From this it is easy to see that \eqref{eq:bound-delta} holds.
		
		Next, we suppose that $\c^* \ne 0$, and assume without loss of generality that every entry in $\c^*$ is positive. (If an entry of $\c^*$ is zero then we can remove the corresponding column from $A$ without affecting the quantities $\bfdelta$ and $\kappa$.  Also, if $\c^*_j < 0$ for some $j$, we can change $\a_j$ to $-\a_j$ so that the solution will simply have $\c^*_j$ changed to $-\c^*_j$, which is then positive.) Since all entries of $\c^*$ are positive, we may conclude that $\a_j^\transpose \bfdelta > \lambda$ for all $j$.
		
		We now multiply both sides of the optimality condition \eqref{eq:optimality} by $\c ^{*\transpose}$ to obtain
		\begin{equation}
			\langle \c^*, A^\transpose \bfdelta \rangle = (1-\lambda) \|\c^*\|_2 ^2 + \lambda \|\c^*\|_1.
			\label{eq:prf-boundregion-1}
		\end{equation}
		Also, by the definition of the oracle point, we have
		\begin{equation}
			\langle A\c^*, \bfdelta \rangle = \langle \b - \bfdelta / \gamma, \bfdelta \rangle = \langle \b, \bfdelta \rangle - \|\bfdelta\|_2^2 / \gamma.
			\label{eq:prf-boundregion-2}
		\end{equation}
		Notice that since the left-hand-side of \eqref{eq:prf-boundregion-1} and \eqref{eq:prf-boundregion-2} are the same, we can equate the right-hand-sides to get
		\begin{align}
			(1- &\lambda) \|\c^*\|_2^2 + \lambda \|\c^*\|_1 \nonumber \\
			&= \langle \b, \bfdelta \rangle - \|\bfdelta\|_2^2 / \gamma \le \|\bfdelta\|_2 - \|\bfdelta\|_2^2/\gamma.
			\label{eq:prf-boundregion-3}
		\end{align}
		We now prove a lower bound on the left-hand-side of \eqref{eq:prf-boundregion-3} in terms of $\|\bfdelta\|_2$. From \eqref{eq:geometry} and $\a_j^\transpose \bfdelta > \lambda$ for all $j$, we have
		\begin{multline}
			(1-\lambda) \|\c^*\|_2 ^2 + \lambda \|\c^*\|_1 \ge (1-\lambda) c_j ^2 + \lambda c_j \\= \frac{\T_{\lambda} (\a_j ^\transpose \bfdelta) ^2}{1-\lambda} + \frac{\lambda \T_{\lambda} (\a_j ^\transpose \bfdelta)}{1-\lambda} = \frac{(\a_j ^\transpose \bfdelta - \lambda) \cdot \a_j ^\transpose \bfdelta}{1-\lambda}
		\end{multline}
		for all $1 \leq j \leq N$. If we now take $j$ to be the index that maximizes $\langle \a_j, \bfdelta / \|\bfdelta\|_2 \rangle$ and use the definition of $\kappa$, then 
		\begin{equation}
			(1-\lambda) \|\c^*\|_2 ^2 + \lambda \|\c^*\|_1 \ge \frac{(\kappa \|\bfdelta\|_2 - \lambda) \cdot \kappa \|\bfdelta\|_2}{1-\lambda}.
			\label{eq:prf-boundregion-4}
		\end{equation}
		Combining \eqref{eq:prf-boundregion-3} with \eqref{eq:prf-boundregion-4}, we get an inequality on $\|\bfdelta\|_2$:
		\begin{equation} 
			\frac{(\kappa \|\bfdelta\|_2 - \lambda) \cdot \kappa \|\bfdelta\|_2}{1-\lambda} \le \|\bfdelta\|_2 - \|\bfdelta\|_2^2/\gamma.
		\end{equation}
		This inequality gives a bound on $\|\bfdelta\|_2$ of
		\begin{equation}
			\|\bfdelta\|_2 \le \frac{\lambda \kappa + 1 - \lambda}{\kappa^2 + (1-\lambda)/\gamma} \le \frac{\lambda \kappa + 1 - \lambda}{\kappa^2},
		\end{equation}
		which completes the proof.
	\end{proof}
	
	\subsection{Proofs of Theorem \ref{thm:subspace-preserving-condition} and \ref{thm:subspace-preserving-condition-inradius}}
	
	Theorem \ref{thm:subspace-preserving-condition} can be obtained by combining Lemma \ref{thm:subspace-preserving-lemma} and Theorem \ref{thm:bound-region}. Theorem \ref{thm:subspace-preserving-condition-inradius} follows from Theorem \ref{thm:subspace-preserving-condition} and the fact that $\kappa_j \ge r_j$ as revealed by Lemma \ref{thm:inradius}.

	\section{Additional Experiments}
	\label{sec:add-exp}
	
	\subsection{Correctness of EnSC}
	
	In Theorem \ref{thm:subspace-preserving-condition-inradius} and Theorem \ref{thm:subspace-preserving-condition}, we give two conditions that guarantee the correctness of the representation given by EnSC for the purpose of subspace clustering. In this section, we use synthetic experiments to verify our theoretical analysis. Specifically, we verify that as the $\ell_1$-$\ell_2$ tradeoff parameter $\lambda$ increases, the representation is more likely to be correct. Moreover, we examine the tightness of our bound for predicting the correctness.
	
	For each pair of $N \in \{100, 200, 400, 800, 1600, 3200\}$ and $\lambda \in \{0.99, 0.95, 0.90, 0.80, 0.60, 0.40, 0.20, 0.10\}$, we randomly generate subspaces and data samples as specified in the caption of Figure \ref{fig:ssr}. We then run EnSC on the generated data matrix and get the representation vectors $\{ \c_j \}_{j=1} ^N$. In Figure \ref{fig:ssr_phase_experiment} we report the percentage of the $\c_j$ vectors that are correct in identifying its subspace. As can be seen, it is easier to get correct representations when $\lambda$ is larger. This is consistent with our intuition: as $\lambda$ becomes larger, the solution is sparser and is more likely to be correct. Moreover, this is consistent with what is predicted by our theoretical analysis, as in both Theorem \ref{thm:subspace-preserving-condition-inradius} and Theorem \ref{thm:subspace-preserving-condition} the condition for correctness is easier to be satisfied as $\lambda$ increases.
	
	We plot the result of Theorem \ref{thm:subspace-preserving-condition} in Figure \ref{fig:ssr_phase_theory}. Specifically, for each $j \in \{1, \cdots, N\}$, we can solve for $\c^*(\x_j, X_{-j} ^\ell)$ by using the ground truth labels and then compute $\bfdelta(\x_j, X_{-j} ^\ell)$ from $\c^*(\x_j, X_{-j} ^\ell)$ by \eqref{def:delta}. Consequently, all quantities in the condition of Theorem \ref{thm:subspace-preserving-condition} can be computed, and consequently whether or not the condition holds.   In Figure \ref{fig:ssr_phase_theory} we plot the percentage of points that satisfy the condition. Since our condition is sufficient but not necessary, we expect the percentage in Figure \ref{fig:ssr_phase_theory} to be no larger than the corresponding percentage in Figure \ref{fig:ssr_phase_experiment}, and the gap between them reveals the tightness of the result of Theorem \ref{thm:subspace-preserving-condition}. This gap is more clearly illustrated in Figure \ref{fig:ssr_curve}, in which we plot selected rows from Figure \ref{fig:ssr_phase_experiment} and \ref{fig:ssr_phase_theory} that correspond to $\lambda = \{0.99, 0.90, 0.60\}$. It can be seen that our condition becomes tighter as $\lambda$ approaches $1$.
	
	Finally, notice that while the condition in Theorem \ref{thm:subspace-preserving-condition} can be checked when the ground truth is known, the condition in Theorem \ref{thm:subspace-preserving-condition-inradius} cannot be since it is generally NP-hard to compute the inradius $r_j$ \cite{Soltanolkotabi:AS13}. This is an advantage of Theorem \ref{thm:subspace-preserving-condition}, in addition to the fact that it has a weaker requirement to guarantee the correctness of EnSC. 
	
	\begin{figure*}[t]
		\centering
		\subfigure[\label{fig:ssr_phase_experiment}Percentage of correctness by experiment.]{\includegraphics[scale = 0.44]{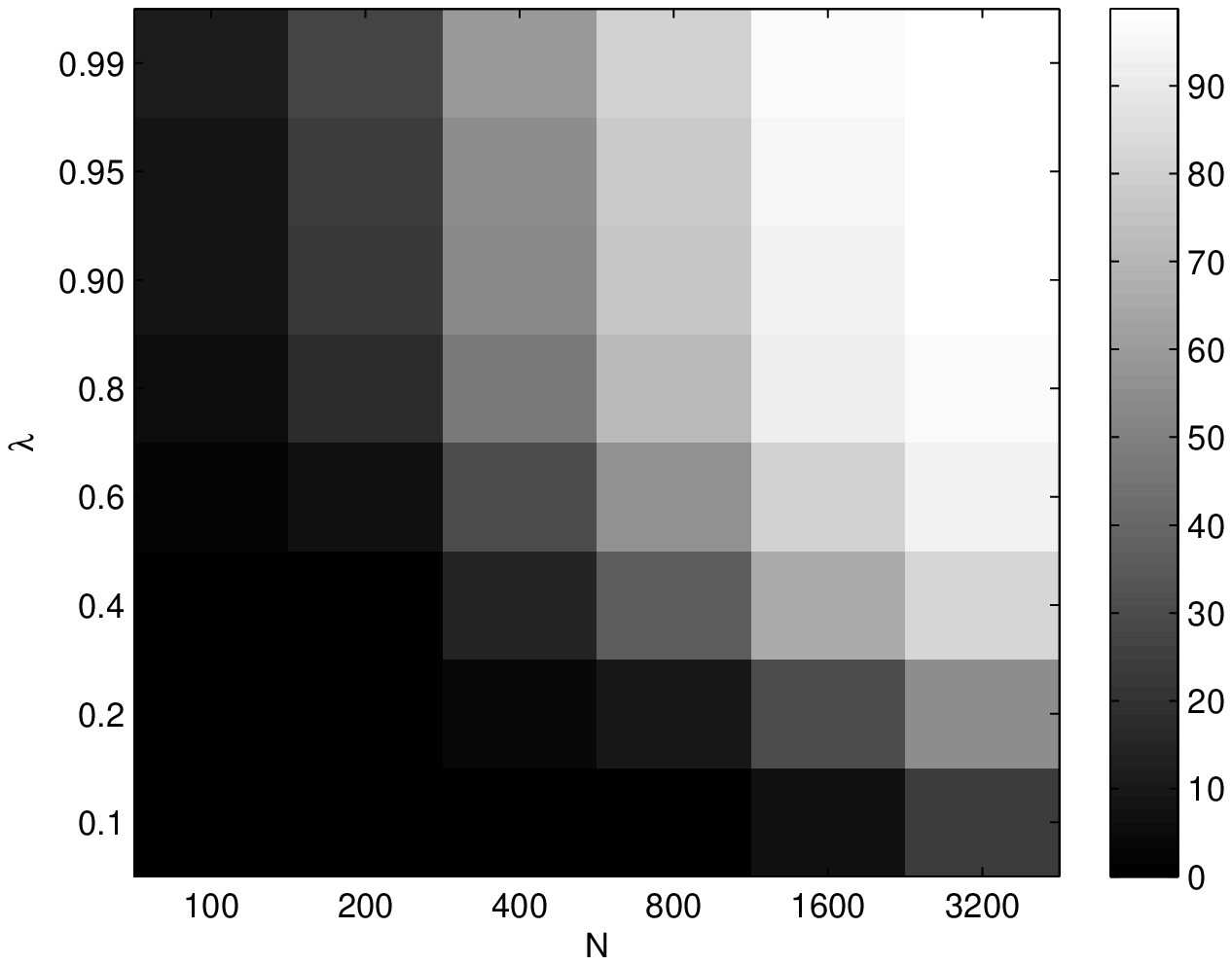}}
		~
		\subfigure[\label{fig:ssr_phase_theory}Percentage of correctness by analysis.]{\includegraphics[scale = 0.44]{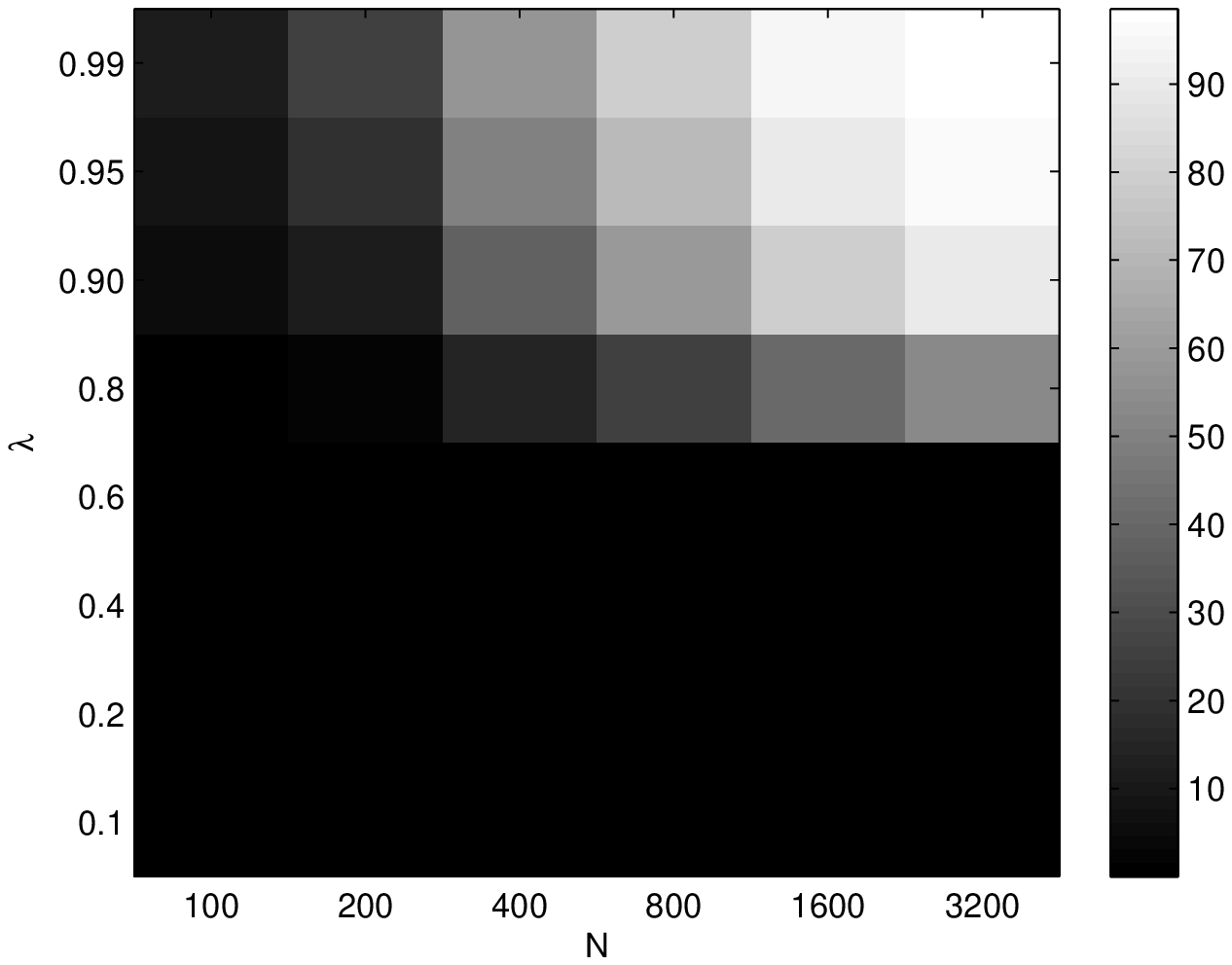}}
		~
		\subfigure[\label{fig:ssr_curve}]{\includegraphics[scale = 0.50]{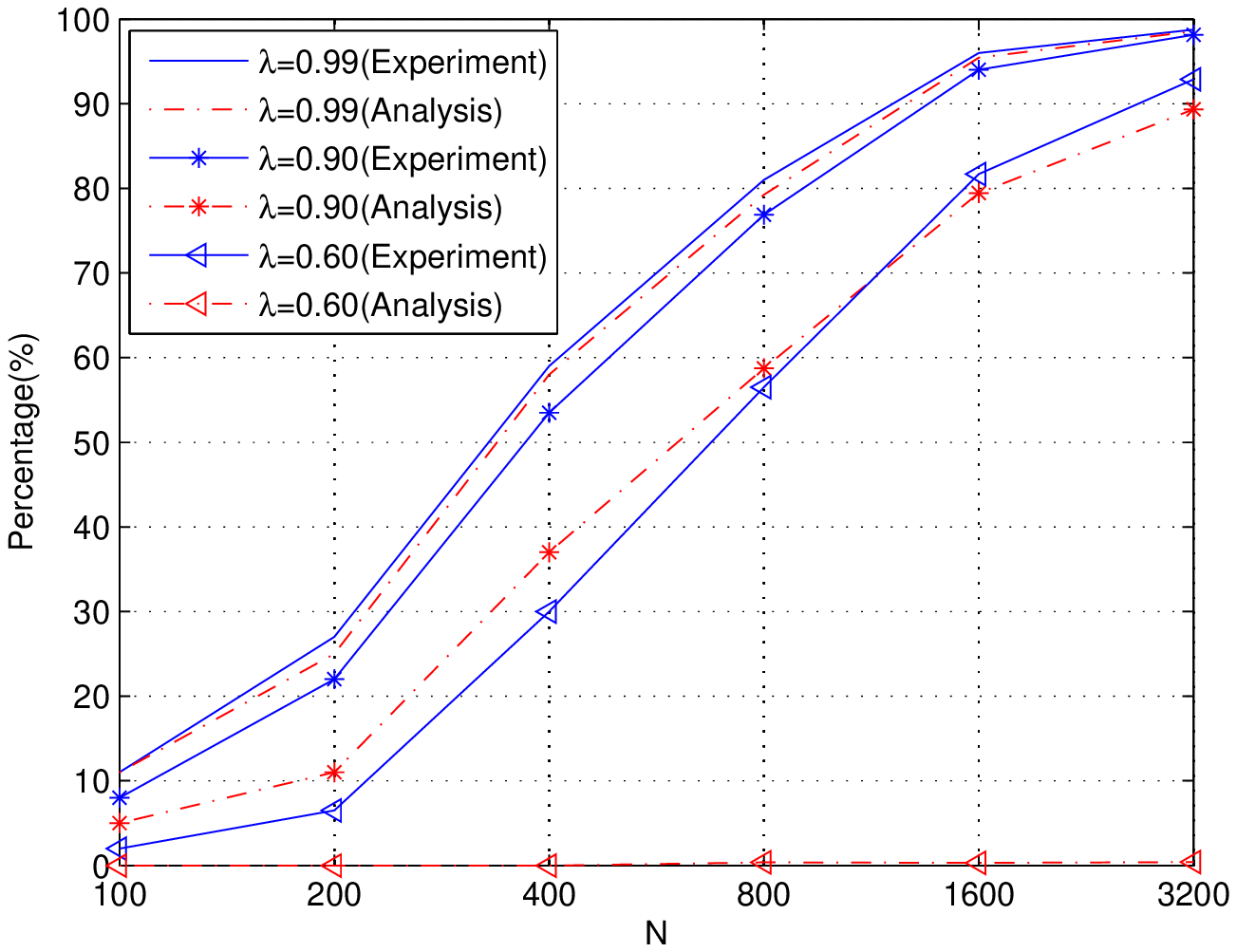}}
		\caption{Correctness of the solution of EnSC for different values of $\lambda$. We generate $4$ subspaces each of dimension $8$ in an ambient space of dimension $20$ uniformly at random. On each subspace, we sample uniformly at random an equal number of points that add up to $N$, which varies from $100$ to $3200$. We report the percentage of representations that are correct in identifying subspaces. (a, b) The percentage of correct representations for different values of $\lambda$ and $N$ as produced by experimental results and as predicted by Theorem \ref{thm:subspace-preserving-condition}, respectively. (c) Plots of selected rows from (a) and (b) to help clarify the difference.} 
		\label{fig:ssr}
	\end{figure*}

	\section{Discussion for the Case $\lambda=1$}
	\label{sec:lambda_eq_1}
	As the analyses and results of this paper are for $\lambda\in [0, 1)$, in this section we discuss the case $\lambda = 1$. It turns out that the geometric structure of the elastic net solution for $\lambda = 1$ is slightly different. As a result, many of the theorems and discussions do not apply for $\lambda = 1$, so that we need a separate discussion for most of the results.
	
	\myparagraph{The oracle point and oracle region} We use the same definitions of the oracle point and oracle region as before. While for $\lambda \in [0, 1)$ the oracle point $\bfdelta$ is unique since $\c^*$ is unique due to the strong convexity of the problem, the same argument does not apply to the case $\lambda = 1$. However, we can sill establish the uniqueness of the oracle point.  
	
	\begin{theorem}
		The oracle point $\bfdelta(\b, A)$ is unique for each choice of  $\lambda \in  [0, 1]$.
	\end{theorem}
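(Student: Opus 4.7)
The plan is to split into two cases. For $\lambda\in[0,1)$, the objective $f(\c;\b,A)$ in \eqref{eq:def-f} is strongly convex (the $\frac{1-\lambda}{2}\|\c\|_2^2$ term has positive coefficient), so $\c^*(\b,A)$ is unique, and then $\bfdelta(\b,A)=\gamma(\b-A\c^*(\b,A))$ is trivially unique. This case is already covered in the paper and needs only a one-line remark.

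The substantive case is $\lambda=1$, where the quadratic in $\c$ drops out and $\c^*$ need not be unique (this is the classical Lasso). Here I would show uniqueness of the residual $\b-A\c^*$ directly, from which uniqueness of $\bfdelta$ follows. Suppose $\c_1^*$ and $\c_2^*$ are both minimizers, with common optimal value $p^*$. Convexity of $f$ implies that $\bar\c:=\tfrac{1}{2}(\c_1^*+\c_2^*)$ is also a minimizer. The $\ell_1$ part satisfies $\|\bar\c\|_1\le\tfrac{1}{2}\|\c_1^*\|_1+\tfrac{1}{2}\|\c_2^*\|_1$ by convexity of the norm, while for the data-fit term I would use strict convexity of $\|\cdot\|_2^2$: since $\b-A\bar\c=\tfrac{1}{2}(\b-A\c_1^*)+\tfrac{1}{2}(\b-A\c_2^*)$, one has
\begin{equation*}
\tfrac{\gamma}{2}\|\b-A\bar\c\|_2^2\le \tfrac{1}{2}\cdot\tfrac{\gamma}{2}\|\b-A\c_1^*\|_2^2+\tfrac{1}{2}\cdot\tfrac{\gamma}{2}\|\b-A\c_2^*\|_2^2,
\end{equation*}
with equality if and only if $\b-A\c_1^*=\b-A\c_2^*$. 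Adding the two bounds gives $f(\bar\c;\b,A)\le p^*$ with strict inequality unless $A\c_1^*=A\c_2^*$. Since $\bar\c$ must also attain the optimum, we conclude $A\c_1^*=A\c_2^*$, and therefore $\bfdelta=\gamma(\b-A\c_1^*)=\gamma(\b-A\c_2^*)$ is well defined.

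I expect no real obstacle: the only subtle point is that strong convexity of $f$ in $\c$ fails at $\lambda=1$, so one must argue uniqueness at the level of the residual rather than the level of the coefficient vector. The strict-convexity-of-the-quadratic argument above handles this cleanly, and the two cases together cover all $\lambda\in[0,1]$.
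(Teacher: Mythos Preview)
Your proof is correct but proceeds differently from the paper for the $\lambda=1$ case. The paper introduces the constrained reformulation $\min_{\c,\e}\|\c\|_1+\tfrac{\gamma}{2}\|\e\|_2^2$ subject to $\b=A\c+\e$, forms the Lagrangian, and derives the dual problem $\max_{\v}\langle\b,\v\rangle-\tfrac{1}{2\gamma}\|\v\|_2^2$ subject to $\|A^\transpose\v\|_\infty\le 1$; this dual is strongly concave, so its maximizer $\v^*$ is unique, and the KKT conditions give $\v^*=\gamma(\b-A\c^*)=\bfdelta$. Your argument stays entirely in the primal and exploits the strict convexity of the quadratic data-fit term in the residual $\b-A\c$: if two optimizers had different residuals, their midpoint would strictly beat the optimum. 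Your route is more elementary and self-contained, avoiding any duality machinery; the paper's route, on the other hand, identifies $\bfdelta$ explicitly as the dual variable and makes the constraint $\|A^\transpose\bfdelta\|_\infty\le 1$ visible, which connects naturally to the oracle-region geometry used elsewhere in the paper. Both are perfectly valid, and for the bare uniqueness statement your argument is arguably cleaner.
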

	\begin{proof}
		For $\lambda < 1$, the optimization problem~\eqref{eq:en} is strongly convex, thus $\c^*$ is unique. Then, by \eqref{eq:def-delta}, $\bfdelta(\b,A)$ is unique.
		
		For $\lambda = 1$, we rewrite problem \eqref{eq:en} equivalently as
		
		\begin{equation}
			\min_{\c, \e} \|\c\|_1 + \frac{\gamma}{2} \|\e\|_2^2 \st \b = A \c + \e.
		\end{equation}
		Introducing the dual vector $\v$, the Lagragian function is
		\begin{equation}
			L(\c, \e, \v) = \|\c\|_1 + \frac{\gamma}{2} \|\e\|_2^2 + \langle \v, \b - A \c - \e \rangle,
		\end{equation}
		and the corresponding dual problem is
		\begin{equation}
			\max_{\v} \, \langle \b, \v \rangle - \frac{1}{2\gamma}\v^\transpose \v \st \|A^\transpose \v\|_\infty \le 1,
			\label{eq:en-dual}
		\end{equation}
		whose objective function is strongly concave with a unique solution $\v^*$. Also, from the optimality conditions we have $\v^* = \gamma \e^* = \gamma (\b - A \c^*(\b,A)) = \bfdelta(\b,A)$, so that $\bfdelta(\b,A)$ is unique. 
	\end{proof}
	
	\myparagraph{The geometric structure of the solution} Recall that from Theorem \ref{thm:geometry} we know that the oracle region contains points whose corresponding coefficients are nonzero, \ie, $\c^*_j \ne 0$ if and only if $\a_j \in \Delta(\b, A)$. For the case $\lambda=1$, this argument no longer holds. Actually, Theorem \ref{thm:geometry} still holds for $\lambda = 1$, but the left-hand-side of \eqref{eq:basic} becomes zero, and it means that no column of $A$ is in the oracle region $\Delta(\b, A)$. To further understand the structure of the solution, we need the following result.
	
	\begin{theorem}
		The solution $\c^* = \c^*(\b,A)$ to problem \eqref{eq:en} with $\lambda = 1$ satisfies that
		if $\c^*_j \ne 0$, then $|\a_j^\transpose \bfdelta| = 1$.
	\end{theorem}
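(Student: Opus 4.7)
The plan is to invoke first-order optimality for the convex program \eqref{eq:en} at $\lambda=1$. The objective $\|\c\|_1 + \tfrac{\gamma}{2}\|\b - A\c\|_2^2$ is convex (though no longer strongly convex in $\c$), so $\c^*$ is optimal if and only if
$$
\0 \in \gamma\, A^\transpose(A\c^* - \b) + \partial\|\c^*\|_1,
$$
which, after substituting $\bfdelta = \gamma(\b - A\c^*)$ from Definition~\ref{def:delta}, becomes $A^\transpose \bfdelta \in \partial\|\c^*\|_1$. Next I would apply the componentwise description of the subdifferential of $|\cdot|$, namely $\partial|\cdot|(t) = \{\mathrm{sign}(t)\}$ for $t \ne 0$ and $[-1,1]$ for $t = 0$. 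For any $j$ with $c_j^* \ne 0$, this inclusion forces $\a_j^\transpose \bfdelta = \mathrm{sign}(c_j^*) \in \{-1,+1\}$, so $|\a_j^\transpose \bfdelta| = 1$, as claimed.

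An equivalent route uses the dual problem \eqref{eq:en-dual} derived just above the theorem. Its feasibility constraint is $\|A^\transpose \v\|_\infty \le 1$, and complementary slackness in the KKT conditions tells us that any index $j$ with $c_j^* \ne 0$ must correspond to an active dual constraint, that is, $|\a_j^\transpose \v^*| = 1$. Since the preceding uniqueness theorem identifies $\v^* = \bfdelta$, the conclusion $|\a_j^\transpose \bfdelta| = 1$ follows at once.

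There is essentially no obstacle here: the statement is a one-line consequence of KKT/LASSO duality. The only thing to verify carefully is the identification between the oracle point and the quantity appearing in the first-order conditions (or, equivalently, the dual optimum). That identification is already built into Definition~\ref{def:delta} and established in the preceding uniqueness theorem, so the proof reduces to citing those and applying the standard subdifferential calculation.
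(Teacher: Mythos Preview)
Your proposal is correct and matches the paper's approach: the paper simply states that the result ``follows from the optimality condition,'' which is precisely your first argument via $A^\transpose\bfdelta \in \partial\|\c^*\|_1$ and the componentwise subdifferential of the absolute value. Your dual/complementary-slackness route is an equivalent rephrasing of the same KKT fact.
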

	
	This result follows from the optimality condition. It means that a coefficient $\c^*_j$ is nonzero only if $\a_j$ is on the boundary of the oracle region $\Delta(\b, A)$, which we denote as $\partial \Delta(\b, A)$. The opposite is generally not true: if $\a_j \in\partial \Delta(\b, A)$, it does not necessarily mean that $\c^*_j \neq 0$.
	
	The geometric structure of the solution is thus clear: all columns of $A$ are outside the oracle region, but some columns of $A$ are in $\partial \Delta(\b,A)$ with some of these corresponding to nonzero coefficients. 
	
	\myparagraph{The ORGEN algorithm} Algorithm \ref{alg:main} needs to be revised when $\lambda = 1$. Specifically, we need an alternative step \ref{step:update-support}:
	\begin{equation} \label{eq:alg-step5-general}
		\text{\ref{step:update-support}''}\!: T_{k+1} \leftarrow \{ j: \a_j \in \Delta(\b, A_{T_k})\} \cup S_k, 
	\end{equation}
	where $S_k = \{ j: [\c^*(\b, A_{T_k})]_j \ne 0 \}$ is the support of $\c^*(\b, A_{T_k})$. Notice that $S_k\subseteq \partial\Delta(\b,A)$ when $\lambda = 1$ so that the two operands in the union in \eqref{eq:alg-step5-general} are disjoint sets. With this modification, one can show that ORGEN converges to an optimal solution in a finite number of iterations. The proof is essentially the same as before and omitted here. In the case when the solution is not unique, the solution that ORGEN converges to depends upon the initialization $T_0$ as well as the specific solution given by the solver in step \ref{step:solve-subproblem}.
	
	For $\lambda \in [0, 1)$, $S_k \subseteq \Delta(\b, A_{T_k})$ by the definition of the oracle region. Thus, the alternative step specified by \eqref{eq:alg-step5-general} applies to any $\lambda \in [0, 1]$. We write this as a theorem.
	
	\begin{theorem}
		Algorithm \ref{alg:main} with the alternative step \ref{step:update-support} specified in \eqref{eq:alg-step5-general} converges to an optimal solution $\c^*(\b, A)$ in a finite number of iterations for all $\lambda \in [0, 1]$.
	\end{theorem}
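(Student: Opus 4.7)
My plan is to handle the two ranges $\lambda \in [0,1)$ and $\lambda = 1$ separately, reducing the first to Theorem~\ref{thm:alg-convergence} essentially for free, and adapting its proof for the second.

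For $\lambda \in [0,1)$, the strategy is to show that the alternative update rule~\eqref{eq:alg-step5-general} in fact coincides with the original step~\ref{step:update-support}, so that nothing new needs to be proved. Indeed, Theorem~\ref{thm:geometry} gives $(1-\lambda)\,c^*_j(\b, A_{T_k}) = \T_\lambda(\a_j^\transpose \bfdelta(\b, A_{T_k}))$, so $c^*_j(\b, A_{T_k}) \neq 0$ forces $|\a_j^\transpose \bfdelta(\b, A_{T_k})| > \lambda$, i.e.\ $\a_j \in \Delta(\b, A_{T_k})$. Hence $S_k \subseteq \{ j : \a_j \in \Delta(\b, A_{T_k}) \}$, the union in~\eqref{eq:alg-step5-general} is redundant, and Theorem~\ref{thm:alg-convergence} applies verbatim to yield finite-iteration convergence to $\c^*(\b, A)$.

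For $\lambda = 1$, I would replay the three-step structure of the proof of Theorem~\ref{thm:alg-convergence}: (i) a strict-decrease lemma analogous to Lemma~\ref{lem:decrease}, namely that whenever $T_{k+1} \not\subseteq T_k$ the objective $f(\c^*(\b, A_{T_{k+1}}); \b, A_{T_{k+1}})$ is strictly less than $f(\c^*(\b, A_{T_k}); \b, A_{T_k})$; (ii) a finiteness argument, using that there are only finitely many subsets of $\{1,\dots,N\}$ and strict decrease precludes revisiting a previously seen active set; and (iii) an optimality check at termination, verifying that the reconstructed vector $\hat{\c}$ (with $\hat{\c}_{T_k} = \c^*(\b, A_{T_k})$ and zeros elsewhere) satisfies the KKT conditions for the full problem. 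The check in (iii) follows by combining optimality of $\c^*(\b, A_{T_k})$ on indices in $T_k$ with the dual feasibility bound $\|A^\transpose \bfdelta(\b, A_{T_k})\|_\infty \leq 1$ obtained from the dual problem~\eqref{eq:en-dual}.

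The main obstacle is step (i) when $\lambda = 1$. The original proof of Lemma~\ref{lem:decrease} uses Propositions~\ref{thm:Delta-dynamic-1} and~\ref{thm:Delta-dynamic-2}, which are stated in terms of the open oracle region and do not apply directly when $\lambda = 1$ since no column lies in its interior. The fix is to establish boundary analogs of these two propositions in which the role of ``$\a_j \in \Delta(\b, A)$'' is played instead by ``$j$ lies in the support of $\c^*(\b, A)$'' (equivalently, $\a_j \in \partial\Delta(\b, A)$ together with a nonzero multiplier). With such boundary versions in hand, the decomposition $Q = T_k \setminus T_{k+1}$, $S = T_k \cap T_{k+1}$, $R = T_{k+1} \setminus T_k$ used in the proof of Lemma~\ref{lem:decrease} carries over; the presence of some $j \in R \subseteq S_k$ guarantees that padding $\c^*(\b, A_S)$ with zeros is suboptimal for the subproblem on $S \cup R$, yielding the strict inequality. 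Once this boundary strict-decrease lemma is established, the finiteness and optimality steps transfer without modification from Theorem~\ref{thm:alg-convergence}.
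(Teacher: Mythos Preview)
Your treatment of $\lambda \in [0,1)$ is correct and matches the paper: since $S_k \subseteq \{j : \a_j \in \Delta(\b,A_{T_k})\}$ in that range, the modified update coincides with the original and Theorem~\ref{thm:alg-convergence} applies directly.

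For $\lambda = 1$, your overall three-step plan is right, but there is a concrete error in how you locate the new indices. You write ``the presence of some $j \in R \subseteq S_k$'', and build the rest of the argument around boundary analogs of Propositions~\ref{thm:Delta-dynamic-1} and~\ref{thm:Delta-dynamic-2}. This is backwards: $S_k$ is the support of $\c^*(\b,A_{T_k})$, so $S_k \subseteq T_k$, while $R = T_{k+1}\setminus T_k$; hence $R \cap S_k = \emptyset$. Since $T_{k+1} = \{j : \a_j \in \Delta(\b,A_{T_k})\} \cup S_k$ and the $S_k$ part is already inside $T_k$, every index in $R$ necessarily satisfies $\a_j \in \Delta(\b,A_{T_k})$, i.e.\ $|\a_j^\transpose \bfdelta(\b,A_{T_k})| > 1$. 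So the new columns lie in the \emph{open} oracle region, not on its boundary, and no boundary analog of Proposition~\ref{thm:Delta-dynamic-2} is needed for the strict-decrease step.

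What does require care when $\lambda=1$ is the loss of strong convexity: $\c^*$ need not be unique, so the proofs of Propositions~\ref{thm:Delta-dynamic-1} and~\ref{thm:Delta-dynamic-2} (and of Lemma~\ref{lem:decrease}) cannot invoke ``the'' solution. The fix is the one you already identified for step~(iii): the oracle point $\bfdelta$ is still unique (via the strongly concave dual~\eqref{eq:en-dual}). Using this, one checks that (a) the particular solver output $\c^*(\b,A_{T_k})$ has zero entries on $Q$ because $Q\cap S_k=\emptyset$ by construction, so its restriction to $S$ is optimal on $A_S$ and $\bfdelta(\b,A_S)=\bfdelta(\b,A_{T_k})$; and (b) any $j\in R$ then violates the KKT bound $|\a_j^\transpose \bfdelta(\b,A_S)|\le 1$, so the zero-padded vector is not optimal on $A_{T_{k+1}}$, yielding strict decrease. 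Finite termination and the optimality check then go through as you outlined. The paper itself simply states that ``the proof is essentially the same as before and omitted here'', so your plan---once the $R\subseteq S_k$ slip is corrected---is in line with what the paper intends.
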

	
	\myparagraph{Correctness of EnSC} Theorem \ref{thm:subspace-preserving-condition} gives a sufficient condition for guaranteeing the correctness of EnSC when $\lambda \in [0, 1)$. In extending the result to the case $\lambda = 1$ we need a slightly stronger condition.
	
	\begin{theorem}\label{thm:subspace-preserving-condition-general}
		Let $\x_j \in \S_\ell$, and $\bfdelta_j$ and $\kappa_j$ be defined as in Theorem \ref{thm:subspace-preserving-condition}. Then, for all $\lambda \in [0, 1]$, the solution $\c^*(\x_j, X_{-j})$ is correct in identifying the subspace $\S_\ell$ if 	
		\begin{equation}	\label{eq:subspace-preserving-condition-general}
			\max_{k: \x_k \notin \S_\ell}\mu(\x_k, \bfdelta_j) < \frac{\kappa_j^2}{\kappa_j + \frac{1-\lambda}{\lambda}}.
		\end{equation}
	\end{theorem}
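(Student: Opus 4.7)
The plan is to split into two cases based on whether $\lambda < 1$ or $\lambda = 1$. In the first case, the strict inequality in \eqref{eq:subspace-preserving-condition-general} immediately implies the non-strict condition \eqref{eq:subspace-preserving-condition} of Theorem \ref{thm:subspace-preserving-condition}, so the conclusion follows without additional work. The substantive case is $\lambda = 1$, where the geometric characterization from Lemma \ref{thm:subspace-preserving-lemma} no longer suffices, because nonzero coefficients can correspond to columns on $\partial\Delta(\x_j, X_{-j}^\ell)$ rather than strictly inside $\Delta(\x_j, X_{-j}^\ell)$.

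For $\lambda = 1$, I would proceed via duality. From the discussion in Section \ref{sec:lambda_eq_1}, the oracle point $\bfdelta_j$ coincides with the unique optimizer of the dual problem \eqref{eq:en-dual} when the dictionary is restricted to $X_{-j}^\ell$, and thus satisfies $\|(X_{-j}^\ell)^\transpose \v\|_\infty \le 1$. Augmenting the dictionary from $X_{-j}^\ell$ to the full $X_{-j}$ only introduces additional constraints $|\x_k^\transpose \v|\le 1$ for $\x_k \notin \S_\ell$, while leaving the dual objective unchanged. Consequently, if $\bfdelta_j$ remains feasible for the augmented dual, it is still its unique maximizer, and complementary slackness forces any primal coefficient associated with a strictly inactive constraint to vanish.

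To verify this feasibility, I would apply Lemma \ref{thm:bound-delta} to the restricted problem at $\lambda = 1$, giving $\|\bfdelta_j\|_2 \le 1/\kappa_j$. Combined with the hypothesis \eqref{eq:subspace-preserving-condition-general} at $\lambda = 1$, which reads $\mu(\x_k, \bfdelta_j) < \kappa_j$ for all $\x_k \notin \S_\ell$, we obtain
\begin{equation*}
|\x_k^\transpose \bfdelta_j| = \mu(\x_k, \bfdelta_j)\cdot \|\bfdelta_j\|_2 < \kappa_j \cdot \frac{1}{\kappa_j} = 1,
\end{equation*}
so each new constraint is strictly inactive at $\bfdelta_j$. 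Therefore $c_k^*(\x_j, X_{-j}) = 0$ for every $\x_k \notin \S_\ell$, which establishes the subspace preserving property.

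The main obstacle is confirming that Lemma \ref{thm:bound-delta} remains valid at $\lambda = 1$, since its stated proof leverages the strict inequality $\a_j^\transpose \bfdelta > \lambda$ on the support of $\c^*$, which degenerates to equality $\a_j^\transpose \bfdelta = 1$ at $\lambda = 1$ via the subdifferential condition. The remedy is to observe that the basic optimality inclusion always yields $\|A^\transpose \bfdelta\|_\infty \le \lambda$, so at $\lambda = 1$ we directly obtain $\kappa\|\bfdelta\|_2 \le 1$, i.e.\ $\|\bfdelta\|_2 \le 1/\kappa$, which is precisely the bound needed above and bypasses the indeterminate $0/0$ form that arises when naively substituting $\lambda = 1$ into \eqref{eq:prf-boundregion-4}.
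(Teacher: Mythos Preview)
Your proposal is correct and actually more detailed than what the paper provides: the paper does not give a formal proof of this theorem, only the remark that the strict inequality is required because at $\lambda=1$ a column on $\partial\Delta(\x_j,X_{-j}^\ell)$ can carry a nonzero coefficient. Your two-case split and your duality argument for $\lambda=1$ are exactly the natural way to make that remark rigorous, and they align with the paper's own use of the dual \eqref{eq:en-dual} in Section~\ref{sec:lambda_eq_1} to establish uniqueness of the oracle point. The key steps---that adding columns only shrinks the dual feasible set, that $\bfdelta_j$ remains the unique dual maximizer if it is still feasible, and that strict inactivity $|\x_k^\transpose\bfdelta_j|<1$ forces $c_k^*=0$ via the inclusion $X_{-j}^\transpose\bfdelta_j\in\partial\|\c^*\|_1$---are all sound. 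Your observation that any optimal primal solution (not just one) is then subspace preserving is also worth stating, since at $\lambda=1$ uniqueness of $\c^*$ is not guaranteed.

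One small inaccuracy: the sentence ``the basic optimality inclusion always yields $\|A^\transpose\bfdelta\|_\infty\le\lambda$'' is not true for $\lambda<1$ when $\c^*\ne 0$; in fact from \eqref{eq:optimality} one has $|\a_j^\transpose\bfdelta|=(1-\lambda)|c_j^*|+\lambda\ge\lambda$ on the support. What is true---and all you need---is that at $\lambda=1$ the condition becomes $A^\transpose\bfdelta\in\partial\|\c^*\|_1$, hence $\|A^\transpose\bfdelta\|_\infty\le 1$, which gives $\kappa_j\|\bfdelta_j\|_2\le 1$ directly. Rewriting that sentence to restrict the claim to $\lambda=1$ fixes the issue without affecting the argument.
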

	
	The difference between \eqref{eq:subspace-preserving-condition-general} and \eqref{eq:subspace-preserving-condition} is that the inequality is strict in \eqref{eq:subspace-preserving-condition-general}. This modification is necessary to handle the case $\lambda = 1$, for the condition \eqref{eq:subspace-preserving-condition} does not exclude the case that $\x_k \notin \S_\ell$ may lie on the boundary of $\Delta(\x_j, X_{-j}^\ell)$ and yet correspond to a nonzero coefficient. 
	
	Finally, we discuss the implication of Theorem \ref{thm:subspace-preserving-condition-general} in the context of SSC. When $\lambda = 1$, condition \eqref{eq:subspace-preserving-condition-general} simplifies to
	\begin{equation}
		\max_{k: \x_k \notin \S_\ell}\mu(\x_k, \bfdelta_j) < \kappa_j.
		\label{eq:subspace-preserving-condition-SSC}
	\end{equation}
	In \cite{Soltanolkotabi:AS13} a sufficient condition for SSC is given by
	\begin{equation}
		\max_{k: \x_k \notin \S_\ell}\mu(\x_k, \bfdelta_j) < r_j.
		\label{eq:subspace-preserving-condition-SSC-prior}
	\end{equation}
	Using the relationship $r_j \le \kappa_j$, our condition in \eqref{eq:subspace-preserving-condition-SSC} is a weaker requirement than that in the previous work. Specifically, condition \eqref{eq:subspace-preserving-condition-SSC-prior} requires that the entire subspace $\S_\ell$ is well-covered by the columns of $X_{-j}^\ell$ so that $r_j$ is large. In contrast, our condition in \eqref{eq:subspace-preserving-condition-SSC} only requires the neighborhood of the oracle point $\bfdelta_j$ to be well-covered, \ie, that there exists a column in $X_{-j}^\ell$ that is close to $\bfdelta_j$. Another advantage of our condition \eqref{eq:subspace-preserving-condition-SSC} is that it can be verified when the ground truth is known. In contrast, the condition in \eqref{eq:subspace-preserving-condition-SSC-prior} cannot be verified since the computation of $r_j$ is generally NP-hard \cite{Soltanolkotabi:AS13}.
	
	\section{Parameters for Experiments on Real Data}
	\label{sec:param}
	
	\begin{table*}[t]
		\centering
		\caption{Parameters for experiments on real data.}
		\label{tbl:parameters}
		\begin{tabular}{c|c|cc|ccc|cc|cc}
			\hline
			&   SSC    & \multicolumn{2}{|c|}{LRSC} & \multicolumn{3}{|c|}{ENSC}              & \multicolumn{2}{|c|}{KMP} & \multicolumn{2}{|c}{EnSC-ORGEN} \\ \cline{2-11}
			& $\alpha$ & $\tau$ &     $\alpha$      & $\lambda_1$ & $\lambda_2$ & $\lambda_3$ & $k$ &      $\lambda$      & $\lambda$ &       $\alpha$       \\ \hline\hline
			Coil-100 &    25    &   5    &         5         &     0.1     &     0.1     &      1      & 100 &         0.1         &   0.95    &          3           \\ \hline
			PIE    &   200    &  100   &        100        &     0.1     &     0.1     &    1000     & 100 &         0.1         &    0.1    &         200          \\ \hline
			MNIST   &   120    &   -    &         -         &      -      &      -      &      -      &  -  &          -          &   0.95    &         120          \\ \hline
			CovType  &    -     &   -    &         -         &      -      &      -      &      -      &  -  &          -          &   0.95    &          50          \\ \hline
		\end{tabular}
	\end{table*}
	For the purpose of reproducible results, we report the parameters used for all the methods in the real data experiments. TSC, OMP and NSN all have a parameter that controls the number of nonzero coefficients in the representation. This parameter is the same as the  ``sparsity'' reported in Table \ref{tbl:subspace-real}. The NSN has two additional parameters. One is the maximum subspace dimension, for which we set as the default value suggested by the original paper. The other is $\epsilon$ which controls a post-processing step. For the purpose of a fair comparison with other methods, we set $\epsilon=0$ which essentially disables this post-processing step. The SSC-SPAMS uses the model in \eqref{eq:self-expression} with $r(\cdot) = \|\cdot\|_1$, $h(\cdot) = \frac12\|\cdot\|_2^2$, and $\gamma = \alpha \cdot \gamma_0$, where $\alpha$ is a hyperparameter specified in Table \ref{tbl:parameters} and $\gamma_0$ is the smallest value of $\gamma$ such that $\c^*(\x_j, X_{-j})$ is nonzero. The parameters for the solver SPAMS are set to their default values. 
	
	For SSC-ADMM we use the code for solving the optimization problem (13) as presented in \cite{Elhamifar:TPAMI13}, with $\alpha$ set to the same value as for SSC-SPAMS. For LRSC we use the code for model (P3) in \cite{Vidal:PRL14}, in which the parameters $\tau$ and $\alpha$ are provided in Table \ref{tbl:parameters}. ENSC has the three key parameters $\lambda_1, \lambda_2$, and $\lambda_3$ in their model (see \cite[equation (4)]{Panagakis:PRL14}). The remaining  parameters were set as suggested by the authors.  For KMP we implemented \cite[Algorithm 1]{Lai:ECCV14} in which the number of iterations $T$ is set to be 150, the parameter $L$ is set to be $1.1$ times the Lipschitz constant, and $k$ and $\lambda$ are reported in Table \ref{tbl:parameters}.
	
	Finally, for the proposed algorithm EnSC-ORGEN, the parameter $\lambda$ controls the trade-off between the $\ell_1$ and $\ell_2$ norms, and the parameter $\alpha$ controls the value for $\gamma$ in \eqref{eq:def-f} via the definition $\gamma = \alpha \gamma_0$, where $\gamma_0$ is the smallest value such that $\c^*(\x_j, X_{-j})$ is nonzero. The parameters are summarized in Table \ref{tbl:parameters}.

	\modify{
		\section{Relation with prior work on EnSC}
	}
	\label{sec:related-EnSC}
	The elastic net formulation was originally proposed in \cite{Zou:JRSS05} and subsequently introduced to subspace clustering in \cite{Fang:ICDM12,Fang:TKDE15,Panagakis:PRL14}. In these works, the regularization $r(\cdot)$ in \eqref{eq:self-expression} is set to be the $\ell_1$-$\ell_2$ combination as in \eqref{eq:r-l1_l2}. For the penalty function $h(\cdot)$, \cite{Panagakis:PRL14} proposes to use the $\ell_1$ penalty, while \cite{Fang:ICDM12} uses a joint $\ell_1$-$\ell_2$ penalty. Both works use existing methods for solving their optimization problem: \cite{Fang:ICDM12} uses the accelerated proximal gradient (APG) \cite{Beck2009} and \cite{Panagakis:PRL14} uses the linearized alternating direction method (LADM) \cite{Lin:NIPS11}.
	
	The optimization model studied here is slightly different from these prior works since we set $h(\e)$ to be the $\ell_2$ penalty as suggested in the original elastic net paper. Despite this difference in modeling the noise, all three models use the elastic net regularization. The major contributions of our work in comparison to these related works are threefold:
	
	\begin{enumerate}
		\item We design a new active-set algorithm for solving the optimization problem. In comparison to APG and LADM that are used in the related works, our method is computationally more efficient, and is able to handle larger datasets.
		
		\item Although using the elastic net for subspace clustering to balance correctness and connectivity is not new, we provide the first detailed argument based on a geometric interpretation of the solution of the elastic net. This deepens the understanding of the approach.
		
		\item We provide (under general conditions) the first proof of correctness for elastic net based subspace clustering.
	\end{enumerate}

\end{appendices}

{\small
\bibliographystyle{ieee}
\bibliography{biblio/vidal,biblio/vision,biblio/math,biblio/learning,biblio/sparse,biblio/geometry,biblio/dti,biblio/recognition,biblio/surgery,biblio/coding,biblio/segmentation}
}

\end{document}